\documentclass[11pt]{article}
\usepackage[letterpaper,margin=1in]{geometry}
\usepackage[T1]{fontenc}
\usepackage{lmodern}
\usepackage{microtype}
\usepackage{graphicx}
\usepackage{booktabs}
\usepackage{amsmath,amssymb,amsfonts,amsthm,bm}
\usepackage[round,authoryear]{natbib}
\usepackage{caption}
\usepackage{url}
\usepackage[hidelinks]{hyperref}
\newtheorem{theorem}{Theorem}
\newtheorem{proposition}{Proposition}
\newtheorem{lemma}{Lemma}

\title{CADS: Measuring the Incremental Contribution of\\LLM Proposers in Sequential Biological Screens}
\author{
  Guanxing Chen$^{1}$, Nihui Shao$^{1}$, Haohuai He$^{2}$\\
  Zhenchao Tang$^{3}$, Pengwei Hu$^{4}$, Zhi-An Huang$^{1,*}$\\[0.65em]
  \small $^{1}$City University of Hong Kong (Dongguan)\\
  $^{2}$The Hong Kong Polytechnic University\\
  $^{3}$Tencent AI Lab\\
  $^{4}$Xinjiang Technical Institute of Physics and Chemistry, Chinese Academy of Sciences\\[0.5em]
  $^{*}$Corresponding author.
}
\date{}
\makeatletter
\renewcommand{\maketitle}{%
  \begin{center}
    \rule{\textwidth}{1.2pt}\par
    \vspace{0.8em}
    {\bfseries\fontsize{18pt}{22pt}\selectfont \@title\par}
    \vspace{0.7em}
    \rule{\textwidth}{0.5pt}\par
    \vspace{0.9em}
    {\@author\par}
  \end{center}
  \vspace{-0.4em}
}
\makeatother

\hypersetup{
  pdftitle={CADS: Measuring the Incremental Contribution of LLM Proposers in Sequential Biological Screens},
  pdfauthor={Guanxing Chen, Nihui Shao, Haohuai He, Zhenchao Tang, Pengwei Hu, Zhi-An Huang},
  pdfsubject={Certified measurement of LLM proposer contributions in sequential biological screens},
  pdfkeywords={LLM agents, biological screens, false discovery rate, e-values, sequential testing}
}

\begin{document}
\maketitle

\begin{abstract}
Large language models are increasingly used to guide sequential biological screens, yet standard evaluations rely on raw hit counts that conflate a proposer's value with the discovery capacity of the underlying algorithmic harness. Adaptive sampling and corrupted experimental readouts also leave standard feedback comparisons without a false-discovery guarantee. We present Certified Agentic Discovery Screens (CADS), an error-controlled measurement framework that quantifies a proposer's incremental value against an identical LLM-free loop. By integrating capped likelihood-ratio bounds, e-processes, and complete-family e-BH, CADS controls the terminal false discovery rate under the stated null and corruption conditions despite arbitrary dependence, adaptive sampling, and optional stopping. In DepMap-backed simulations, CADS exposes two evaluation artifacts: the LLM-free harness alone clears the feedback gate, and raw origin counts overstate LLM contributions because LLM picks displace fallback discoveries. On the initial 20 seeds, supplying tested-candidate history to Qwen2.5-7B shifts its net contribution from $+0.35$, with a $95\%$ interval covering zero, to $+2.90$, with the interval above zero. This initial net effect did not persist in a 50-seed matched-quota replication. These results support net gain over an LLM-free loop, measured between separately certified sets, as the appropriate measure of proposer contribution.
\end{abstract}

\begin{figure}[t]
\centering
\includegraphics[width=\columnwidth]{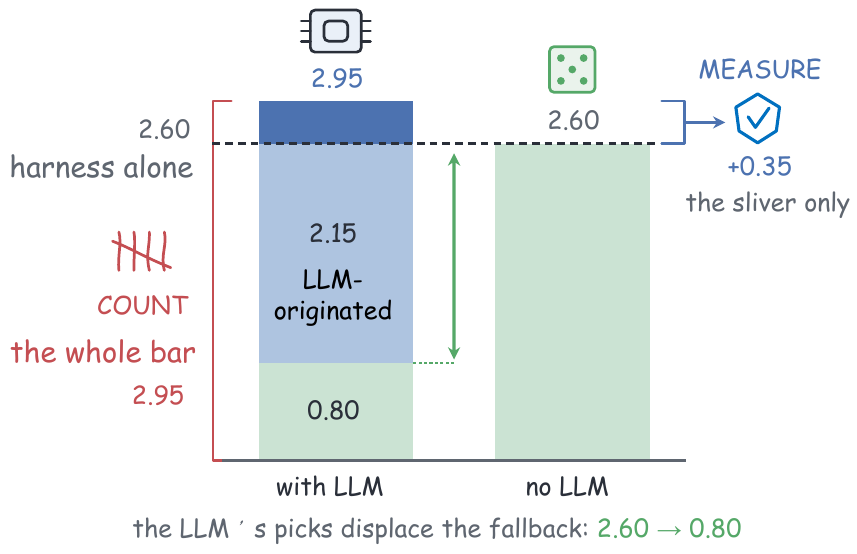}
\caption{Counting is not measuring. The same loop (an LLM proposer with a re-selection policy
and a random fallback) reads two ways. \emph{Counting} its hits credits every one to the proposer and
carries no error guarantee. \emph{Measuring} reads it through an FDR certificate, reporting the
proposer's net over an identical no-LLM loop, each loop's own declarations being FDR-controlled. They
disagree sharply: the no-LLM loop alone scores $2.60$, so the net is only $+0.35$ under a thin prompt.}
\label{fig:motivation}
\end{figure}

\section{Introduction}
\label{sec:intro}

LLM agents can propose genes or compounds, receive assay summaries, and redirect later rounds of a
biological screen \citep{roohani2025biodiscoveryagent,wainrib2026labfeedback}. These systems are often
judged by how many selected candidates become hits. A discovery loop, however, also contains a
re-selection policy and a fallback that fills slots the LLM leaves empty. Raw hit counts can therefore
credit the proposer with discoveries the rest of the loop would have made on its own
(Fig.~\ref{fig:motivation}), and they provide no control of false discoveries.

The missing quantity is the proposer's added contribution relative to the same loop without the LLM.
Origin attribution is not enough: an LLM-selected hit may displace a hit that re-selection or random
fallback would otherwise find. The relevant counterfactual is therefore the same discovery loop with
the proposer removed, not a count of hits attached to the proposer inside the full loop.

Reliable measurement must also separate proposal quality from readout contamination: poor candidate
choices reduce discovery power, whereas corrupted numerical readouts can invalidate the test. Existing
bandit multiple-testing work separates adaptive exploration from e-process evidence and extends it to
corrupted settings \citep{xu2021bandit,zhou2026robust}, and concurrent work finds non-LLM baselines
matching LLM feedback gains under cumulative hit-rate evaluation \citep{gupta2025bayesopt}. What
remains is a paired, error-controlled screen that measures incremental proposer contribution while
preserving the full registered family at the terminal decision.

To address this measurement gap, we introduce Certified Agentic Discovery Screens (CADS), an
FDR-controlled measuring instrument. CADS runs the full discovery loop beside an otherwise identical
proposer-removed loop and reports their paired difference in certified discoveries. A capped
likelihood-ratio increment limits the influence of each readout, e-processes accumulate evidence across
rounds, and complete-family e-BH makes the terminal decision over every registered candidate
(Fig.~\ref{fig:architecture}). Under the stated clean-null and conditional-contamination assumptions,
this controls FDR despite adaptive selection, optional stopping, and dependence. These statistical
components come from established bandit-multiple-testing and robust-e-value methods
\citep{xu2021bandit,zhou2026robust,saha2024huberrobust}; the novelty is the screen contract and its use
for paired measurement, not new FDR-control theory.

DepMap-backed simulations first test the instrument: across the contamination grid and eight reference
settings, CADS stays below the target FDR while naive BH and uncapped LR e-BH can exceed it. Evaluation
of three open-weight models then reveals what raw counts hide. The no-LLM re-selection loop clears the
standard feedback gate on its own, and origin-attributed discoveries overstate the proposer's net
because some displace hits the fallback would otherwise find. Prompt state is also measurable:
listing already-tested candidates changes how many valid candidates every model delivers, although a
$50$-seed replication does not reproduce the initial effect size on the net.

Our contributions are:
\begin{itemize}
\item \textbf{A certificate used for paired measurement.} Under the stated clean-null and conditional
contamination assumptions, CADS controls FDR under adaptive allocation, stopping, and dependence, then
measures proposer contribution as the difference between separately certified discovery sets.
\item \textbf{A proposer-removed counterfactual.} Removing only the proposer isolates discoveries
made by re-selection and fallback and accounts for valid fallback discoveries displaced by LLM picks.
\item \textbf{Prompt information as a measured factor.} Holding the loop fixed while exposing
already-tested candidates identifies a model-dependent change in valid proposals, although the
$50$-seed replication does not reproduce the initial net effect size.
\end{itemize}

\begin{figure}[t]
\centering
\includegraphics[width=\textwidth]{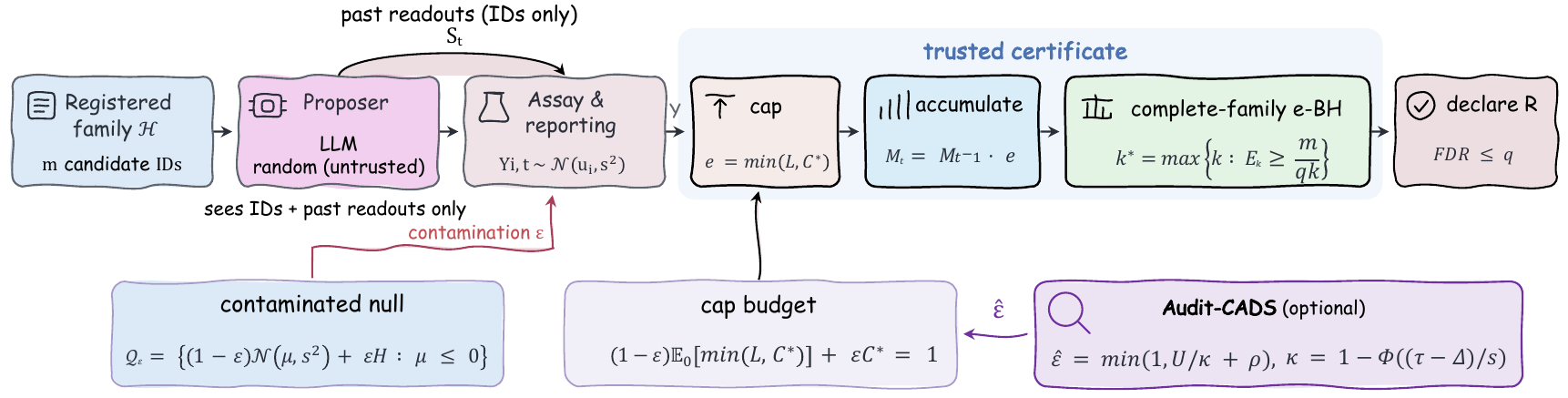}
\caption{CADS separates proposal, evidence, declaration, and evaluation. An arbitrary proposer
returns candidate identifiers from a view containing only identifiers and a protocol-released summary
of past readouts, which in our runs is high- and low-reading identifier lists.
The assay/reporting channel returns numerical evidence. A trusted certificate caps likelihood ratios,
accumulates e-processes, fills unobserved registered candidates with neutral e-value one, and applies
e-BH to the complete family. Reference effects and labels sit below the truth-isolation barrier and are
evaluation-only. Audit-CADS is a separate, conditional path requiring a stationary shared channel, audit
sensitivity at least $\kappa$, and test--audit mismatch at most $\rho$.}
\label{fig:architecture}
\end{figure}

\section{Related Work}
\label{sec:related}

\paragraph{Adaptive multiple testing.}
The closest statistical foundation is the modular bandit-testing framework of
\citet{xu2021bandit}, which separates arbitrary exploration from e-process
evidence and e-BH decisions under optional stopping, cross-arm dependence,
simultaneous queries, and multiple agents. \citet{zhou2026robust} extends this
setting to Huber and adversarial corruption through bounded-influence robust
e-processes. CADS adopts these principles. Its contribution is an operational
screening contract: a fixed registry, a proposer interface, truth isolation,
complete-family terminal checks, and a paired proposer-removed loop. The capped
simple likelihood ratio and conditional randomized-control extension implement
that contract; they are not a new general bandit-FDR theory.

\paragraph{e-values under contamination.}
e-BH controls FDR from valid e-values under arbitrary dependence
\citep{wang2022fdr}, and e-processes support optional continuation through
nonnegative supermartingales
\citep{ramdas2023savi,grunwald2024safe,ville1939etude}. Huber's censored
likelihood-ratio test supplies the robust primitive
\citep{huber1964robust,huber1965robust}; later work develops anytime robust
likelihood ratios and confidence sequences
\citep{saha2024huberrobust,wang2023huber}. Our proof specializes this machinery
to a one-sided Gaussian family. Online FDR procedures
\citep{foster2008alphainvesting,javanmard2018online,ramdas2018saffron,xu2023onlineevalues}
still require valid input evidence, so changing the terminal decision rule does
not by itself repair contaminated readouts.

\paragraph{LLM-driven experimental feedback.}
BioDiscoveryAgent motivates LLM-guided perturbation design
\citep{roohani2025biodiscoveryagent}, and laboratory-feedback studies
\citep{wainrib2026labfeedback} together with GEARS, PerturbQA, and GenePT
\citep{roohani2024gears,wu2025perturbqa,chen2025genept} provide the adjacent
biological setting. The nearest empirical comparison is concurrent work by
\citet{gupta2025bayesopt}: under cumulative hit-rate evaluation, non-LLM
baselines match or beat LLM feedback, and part of the hybrid gain comes from
showing the agent which candidates were already tested. CADS does not claim
these qualitative observations as new. It turns the no-LLM loop into a paired,
error-controlled measurement, accounts for displaced fallback discoveries,
and measures prompt information on certified outcomes. Conformal selection
\citep{jin2023selection,qin2025revamping,asiaee2026partial} controls
shortlisting errors under exchangeability, but does not replace sequential
evidence when candidates can be observed adaptively.

\section{Registered Screen and Certificate}
\label{sec:method}

\paragraph{Method overview.}
CADS treats proposal, evidence accumulation, and declaration as separate
interfaces. It first registers the candidate family and the information
available to any proposer, then converts each numerical readout into a bounded
e-value under a one-sided contaminated-null model. These increments form
anytime-valid e-processes, and complete-family e-BH produces one terminal
declared set. Conditional Audit-CADS optionally replaces a fixed contamination
cap with a predictable cap estimated from blinded controls under additional
assumptions. The same certificate is then run independently in paired loops to
measure what an LLM proposer adds.

\paragraph{Interface and target.}
CADS separates proposal, numerical evidence, and terminal declaration under one registered
screening contract. Before any readout, register a family $\mathcal H=\{H_i:i\in[m]\}$. At round $t$, a proposer observes
a view $V_t$ containing candidate identifiers and structured past readouts, then returns a batch
$S_t\subseteq[m]$. The choice is predictable with respect to past information and may be randomized,
heuristic, or LLM-driven. Assaying $i\in S_t$ returns a numerical $Y_{i,t}$. Benchmark effects and
reference labels remain in a private evaluation oracle and never enter $V_t$, the acquisition score,
or the certificate. At a terminal, possibly data-dependent stopping time $\tau$, the certificate
returns $R\subseteq[m]$ and targets
\begin{equation}
\mathrm{FDR}(R)=\mathbb E\!\left[\frac{|R\cap\mathcal N|}{|R|\vee1}\right]\le q,
\label{eq:fdr}
\end{equation}
where $\mathcal N$ is the unknown set of true nulls. The guarantee is for this terminal registered
family, not for hypotheses created after registration or for an uncorrected union of declarations
made at multiple times.

\paragraph{One-sided contaminated null.}
Reference labels are threshold-derived, so the clean null is composite. Conditional on the past, a
null readout has a law in
\begin{equation}
\mathcal Q_{\varepsilon}
=\left\{(1-\varepsilon)\mathcal N(\mu,s^2)+\varepsilon H:
\mu\le0,\ H\ \text{arbitrary}\right\}.
\label{eq:nullfamily}
\end{equation}
The bound is per read and may hold for a history-dependent choice of $\mu$ and $H$. It does not assert
that an LLM's proposal mistakes are draws from this channel. A channel that
keeps the same marginal contamination rate but violates the conditional bound
voids the guarantee. Likewise, misspecifying the working alternative costs
power, whereas misspecifying the clean null can cost validity. Because the cap
budget below is an equality, setting
$\bar\varepsilon=\varepsilon$ leaves no assumption margin. Over-stating
$\varepsilon$ is the frozen cap's only headroom, and it protects against some
location and scale errors rather than arbitrary shape errors. Appendices~E
and~F, and the assumption stress tests below, quantify these boundaries.

\paragraph{Deployment contract.}
The guarantee is conditional, and most of the conditions are the laboratory's rather than the
certificate's. CADS returns FDR $\le q$ over the registered family at a data-dependent stopping time,
for an arbitrary proposer, provided: (C1) the family is registered before the campaign and no
hypothesis is added during it; (C2) the proposer has no direct access to private oracle effects, labels,
or oracle-derived rankings, seeing only registered identifiers and protocol-released readouts; (C3) the certificate reads the numerical channel, not
the agent's account of it; (C4) the clean null lies in the declared family $\mathcal N(\mu,s^2)$,
$\mu\le0$, at the stated $s$, so shape is specified and not only location and scale; a wider clean-law
family may be declared instead if the cap is re-solved for its worst case; and (C5)
per-read contamination is at most $\bar\varepsilon$ conditional on the past. C1--C3 are enforced by
construction and are covered by the released contract tests; C4 and C5 are the lab's to justify.
Audit-CADS calibrates C5 from blinded controls under its own extra assumptions (\S\ref{sec:audit}); it
does not remove or verify it. What CADS does not return is biological validity, replication, or utility:
it bounds one error rate over one family, and a certified set is a shortlist that has been priced, not a
set of facts.

Let $f_0=\mathcal N(0,s^2)$ be the boundary null, $f_1=\mathcal N(\theta,s^2)$ for fixed $\theta>0$,
and $L=f_1/f_0$. For $0<\varepsilon<1$, define $C^\ast$ by
\begin{equation}
(1-\varepsilon)\mathbb E_{f_0}[\min\{L,C^\ast\}]+\varepsilon C^\ast=1,
\label{eq:capbudget}
\end{equation}
and set $e(y)=\min\{L(y),C^\ast\}$. For $\varepsilon=0$ we use the uncapped LR. In the Gaussian
location family, $L$ and $e$ are nondecreasing and the clean expectation has a closed form. If
$t(C)=s^2\log(C)/\theta+\theta/2$, then
\begin{equation}
\mathbb E_{f_0}[\min\{L,C\}]
=\Phi\!\left(\frac{t(C)-\theta}{s}\right)
+C\left\{1-\Phi\!\left(\frac{t(C)}{s}\right)\right\}.
\label{eq:closedform}
\end{equation}

\begin{theorem}[Composite-null robust increment]
\label{thm:increment}
The capped increment defined by \eqref{eq:capbudget} satisfies
$\sup_{Q\in\mathcal Q_\varepsilon}\mathbb E_Q[e(Y)]\le1$.
\end{theorem}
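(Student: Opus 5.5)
The plan is to split the expectation under an arbitrary $Q=(1-\varepsilon)\mathcal N(\mu,s^2)+\varepsilon H\in\mathcal Q_\varepsilon$ into its clean and contaminated parts and bound each separately:
\[
\mathbb E_Q[e(Y)]=(1-\varepsilon)\,\mathbb E_{\mathcal N(\mu,s^2)}[e(Y)]+\varepsilon\,\mathbb E_H[e(Y)].
\]
The contaminated term is immediate, since $0\le e\le C^\ast$ pointwise, so $\mathbb E_H[e(Y)]\le C^\ast$ for any $H$. The conditional, history-dependent version needs no extra work: the bound holds for every $(\mu,H)$, so it holds conditional on the past for whichever $(\mu,H)$ is selected.

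For the clean term, I use that $L(y)=\exp\{(\theta y-\theta^2/2)/s^2\}$ is increasing in $y$ because $\theta>0$. Hence $e=\min\{L,C^\ast\}$ is nondecreasing. The family $\mathcal N(\mu,s^2)$ is stochastically ordered in $\mu$, since $Y\overset{d}{=}Z+\mu$ with $Z\sim\mathcal N(0,s^2)$. Therefore $\mu\mapsto\mathbb E_{\mathcal N(\mu,s^2)}[e(Y)]$ is nondecreasing, and for $\mu\le0$ it is at most $\mathbb E_{f_0}[\min\{L,C^\ast\}]$. Combining the two parts,
\[
\mathbb E_Q[e(Y)]\le(1-\varepsilon)\mathbb E_{f_0}[\min\{L,C^\ast\}]+\varepsilon C^\ast=1
\]
by the cap budget \eqref{eq:capbudget}. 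Taking the supremum over $\mu\le0$ and $H$ gives the claim. The edge case $\varepsilon=0$ uses the uncapped LR with $\mathbb E_{\mathcal N(\mu,s^2)}[L]\le\mathbb E_{f_0}[L]=1$, again by monotonicity.

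The main obstacle is that the theorem implicitly requires $C^\ast$ to exist and be well defined for each $\varepsilon\in(0,1)$. I would settle this first. Define $g(C)=(1-\varepsilon)\mathbb E_{f_0}[\min\{L,C\}]+\varepsilon C$, which is continuous and strictly increasing in $C$, with a positive $\varepsilon C$ slope. At $C=1$, $\min\{L,1\}\le L$ and $\mathbb E_{f_0}[L]=1$, and the inequality is strict because $L$ exceeds $1$ with positive probability; so $g(1)\le1$. As $C\to\infty$, $g(C)\to\infty$. By the intermediate value theorem there is a unique root $C^\ast\ge1$; the closed form \eqref{eq:closedform} is not needed. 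If one wants $e$ to be a valid e-value at $C^\ast<1$, that case cannot arise, since $g(C)<1$ for $C<1$.

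Measurability and nonnegativity of $e$ are trivial, so no further steps remain.
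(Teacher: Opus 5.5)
Your proof is correct and follows the paper's own argument. You bound the clean Gaussian component by its $\mu=0$ boundary using stochastic dominance and the monotonicity of $e$, bound the contamination by the pointwise cap $C^\ast$, and close with the budget equation \eqref{eq:capbudget}, with existence and uniqueness of $C^\ast$ by the intermediate value theorem as in Lemma~\ref{lem:cap}; your strictness remark actually gives $g(1)<1$, hence $C^\ast>1$ rather than just $C^\ast\ge1$.
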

\emph{Proof sketch.} For the clean component, $e$ is nondecreasing, so
$\mathcal N(\mu,s^2)\preceq_{\mathrm{st}}\mathcal N(0,s^2)$ for $\mu\le0$ implies
$\mathbb E_\mu[e]\le\mathbb E_{f_0}[e]$. For the contamination component, $e\le C^\ast$ pointwise.
Equation~\eqref{eq:capbudget} then gives the result. Appendix~A gives the full argument and the
closed-form calculation.

\paragraph{Anytime accumulation and complete-family e-BH.}
When candidate $i$ is sampled, multiply its current evidence by $e(Y_{i,t})$; when it is not sampled,
multiply by one. Under the conditional version of \eqref{eq:nullfamily}, the resulting $M_{i,t}$ is a
nonnegative supermartingale. Thus $E_i=M_{i,\tau}$ is an e-value at the adaptive stopping time. For an
unobserved candidate, $E_i=1$. Sort all $m$ values $E_{(1)}\ge\cdots\ge E_{(m)}$, set
\begin{equation}
k^\ast=\max\{k:E_{(k)}\ge m/(qk)\},
\label{eq:ebh}
\end{equation}
and return the top $k^\ast$.

\begin{theorem}[Registered-family CADS]
\label{thm:cads}
Under the conditional null family \eqref{eq:nullfamily}, CADS controls the terminal FDR in
\eqref{eq:fdr} for every predictable proposer and stopping time, without assumptions on dependence
among candidates.
\end{theorem}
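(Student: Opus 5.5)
The plan has two steps. First, show that each null candidate's terminal evidence $E_i$ is a valid e-value at the adaptive stopping time. Then invoke the standard e-BH guarantee of \citet{wang2022fdr}, which holds under arbitrary dependence, applied to the complete registered family of $m$ values.

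For the first step, fix a null $i\in\mathcal N$ and let $\mathcal F_t$ denote the history after round $t$. This history includes all past readouts, the proposer's internal randomization, and the views $V_1,\dots,V_{t+1}$. Write
\[
M_{i,t}=M_{i,t-1}\,e(Y_{i,t})^{\mathbf 1\{i\in S_t\}},\qquad M_{i,0}=1 .
\]
Predictability means $\{i\in S_t\}\in\mathcal F_{t-1}$. So on $\{i\in S_t\}$ we can use the conditional form of \eqref{eq:nullfamily}: given $\mathcal F_{t-1}$, the law of $Y_{i,t}$ lies in $\mathcal Q_\varepsilon$, possibly with history-dependent $\mu\le0$ and history-dependent $H$. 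Theorem~\ref{thm:increment} then gives $\mathbb E[e(Y_{i,t})\mid\mathcal F_{t-1}]\le1$. On the complement the factor is $1$. Hence $\mathbb E[M_{i,t}\mid\mathcal F_{t-1}]\le M_{i,t-1}$, and $M_{i,t}$ is a nonnegative supermartingale.

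Next, apply optional stopping for nonnegative supermartingales at $\tau$. Fatou's lemma handles unbounded or a.s.\ finite $\tau$, in the form $\mathbb E[M_{i,\tau}]\le\liminf_n\mathbb E[M_{i,\tau\wedge n}]\le 1$. This yields $\mathbb E[E_i]\le1$. Candidates never sampled have $E_i=1$ by construction, which is trivially valid. Batches $S_t$ with several candidates need no extra care: the argument is marginal in $i$, so joint laws of readouts within a round do not matter.

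For the second step, recall the e-BH argument for a self-consistent rejection set. Let $R$ be the top $k^\ast$ values from \eqref{eq:ebh}. For every $j\in R$ we have $E_j\ge m/(q|R|)$, so $\mathbf 1\{j\in R\}\le q|R|E_j/m$. Therefore
\[
\mathrm{FDR}(R)=\sum_{i\in\mathcal N}\mathbb E\!\left[\frac{\mathbf 1\{i\in R\}}{|R|\vee1}\right]\le\sum_{i\in\mathcal N}\frac{q}{m}\mathbb E[E_i]\le\frac{q|\mathcal N|}{m}\le q .
\]
This uses only linearity of expectation and the marginal validity of each $E_i$. That is exactly why no assumption on dependence across candidates is needed, and why the same argument covers an arbitrary predictable proposer: the proposer affects only which indicators fire, not the supermartingale property. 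It also shows why the family must be complete. The denominator $m$ must be the registered family size fixed in advance (C1), and filling unobserved candidates with $E_i=1$ keeps the count honest.

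I expect the main obstacle to be the measure-theoretic bookkeeping in the first step. The conditional null must be stated so that the history-dependent choice of $(\mu,H)$, the proposer's randomization, and the batch membership are all measurable with respect to $\mathcal F_{t-1}$. Otherwise Theorem~\ref{thm:increment} cannot be applied conditionally. I would handle this by defining the filtration to include the proposer's randomization and by using regular conditional distributions. Then the sup over $\mathcal Q_\varepsilon$ in Theorem~\ref{thm:increment} bounds the conditional expectation pointwise on $\{i\in S_t\}$. The second step is routine once this is in place.
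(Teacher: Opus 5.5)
Your proposal is correct and follows the paper's proof essentially step for step. Predictability plus the composite-null increment bound makes each null $M_{i,t}$ a nonnegative supermartingale, optional stopping with Fatou gives $\mathbb E[E_i]\le1$, and the self-consistency inequality $\mathbf 1\{i\in R\}\le (q|R|/m)E_i$ yields $\mathrm{FDR}\le q|\mathcal N|/m\le q$ with no dependence assumptions. The only extra detail in the paper is an explicit convention on $\{\tau=\infty\}$: it sets $M_{i,\tau}$ to the almost-sure supermartingale limit, so your Fatou step also covers stopping times that are infinite with positive probability, not just unbounded ones.
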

This is the established e-process/e-BH bandit argument \citep{xu2021bandit,wang2022fdr}: the adaptive
allocation affects which multipliers are revealed but not their conditional null expectation, and
e-BH satisfies the pointwise false-discovery-proportion (FDP) bound
$\mathrm{FDP}(R)\le(q/m)\sum_{i\in\mathcal N}E_i$. The full family size $m$ is essential. Applying
e-BH only to the adaptively observed subset changes the threshold and is not the procedure covered by
Theorem~\ref{thm:cads}. The complete proof is in Appendix~A.

\subsection{Conditional Audit-CADS}
\label{sec:audit}

Fixed-$\varepsilon$ CADS needs a conservative upper bound. Audit-CADS replaces this input only under
a stronger identifiable model, and the assumptions have to be stated at the strength the proof uses.
Blinded randomized controls traverse the same stationary reporting channel as test candidates. Given
the channel parameters, their flags are i.i.d.\ Bernoulli with mean $p$, and the whole flag sequence
is independent of the test potential outcomes and of any test-side randomization; separate streams are
not enough to assume, so we assume the independence itself. Each harmful audit corruption is flagged
with probability at least $\kappa>0$, which \S\ref{sec:kappa} derives from a declared corruption-support class rather than assumes (clean false
positives only raise $p$, which only makes the bound more conservative). The mismatch condition is
\emph{conditional}, not a marginal rate: almost surely, for every null $i$ and round $t$,
\begin{equation}
\begin{aligned}
\Pr\!\left(\text{read }(i,t)\text{ corrupted}\mid\mathcal F_{t-1},\mathcal A\right)
&\le\varepsilon_{\mathrm{test}}\\
&\le\varepsilon_{\mathrm{audit}}+\rho
\end{aligned}
\label{eq:mismatch}
\end{equation}
with $\mathcal F_{t-1}$ the test history and $\mathcal A$ the whole audit sequence,
for pre-specified $\rho\ge0$. A scalar marginal rate would not do: the all-or-nothing channel above
has the same marginal $0.2$, yet on its corrupted branch three cap-saturating reads give
$C^*(0.2)^3=11.81>1/(q-\delta)=11.11$ at $q=0.10,\delta=0.01$, so the null is declared. Condition
\eqref{eq:mismatch} is exactly what rules that out, and it is condition C5
of the deployment contract rather than anything the audit can verify.
Let $U_t$ be a time-uniform $(1-\delta)$ upper confidence sequence for $p$ from a beta-binomial
mixture martingale. Before test round $t$, use
\begin{equation}
\widehat\varepsilon_t
=\min\{1,U_{t-1}/\kappa+\rho\}.
\label{eq:auditcap}
\end{equation}

\begin{theorem}[Conditional Audit-CADS]
\label{thm:audit}
Let $0<\delta<q<1$. Assume C1--C4 and the hypotheses of Theorem~\ref{thm:cads}, and in addition the
shared-channel, flag-sensitivity, conditional-mismatch \eqref{eq:mismatch}, and audit/test independence
assumptions above. Then the predictable cap in \eqref{eq:auditcap}, followed by e-BH at level
$q-\delta$ on the complete registered family, controls FDR at $q$.
\end{theorem}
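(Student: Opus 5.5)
The plan is to split the error budget into a $\delta$ event on which the audit confidence sequence fails and a complementary ``good'' event $G$ on which it covers. On $G$, the predictable cap behaves like a valid fixed-$\varepsilon$ cap round by round, so the argument of Theorem~\ref{thm:cads} applies there at level $q-\delta$. The first step is the good event. Let $G=\{p\le U_t\ \forall t\}$. Because $U_t$ is a time-uniform $(1-\delta)$ upper confidence sequence built from a beta-binomial mixture martingale, Ville's inequality gives $\Pr(G^c)\le\delta$. Flag sensitivity says each harmful audit corruption is flagged with probability at least $\kappa$, so the audit corruption rate satisfies $\varepsilon_{\mathrm{audit}}\le p/\kappa$. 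On $G$, this gives $\varepsilon_{\mathrm{audit}}\le U_{t-1}/\kappa$. Combining with the conditional mismatch condition \eqref{eq:mismatch}, for every null $i$ and round $t$,
\[
\Pr(\text{read }(i,t)\text{ corrupted}\mid\mathcal F_{t-1},\mathcal A)\le\widehat\varepsilon_t .
\]

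The second step is a conditional increment bound. I will condition on the entire audit sequence $\mathcal A$. By audit/test independence, conditioning on $\mathcal A$ does not change the law of the test potential outcomes or of the test-side randomization. It also makes $\widehat\varepsilon_t$ measurable with respect to $\sigma(\mathcal F_{t-1},\mathcal A)$. On $G$, which is itself $\mathcal A$-measurable, each null read therefore has a conditional law in $\mathcal Q_{\widehat\varepsilon_t}$. I will apply Theorem~\ref{thm:increment} with $\varepsilon=\widehat\varepsilon_t$, using the cap $C^\ast(\widehat\varepsilon_t)$ solved from \eqref{eq:capbudget}. This requires a short monotonicity check: a law in $\mathcal Q_{\varepsilon'}$ with $\varepsilon'\le\widehat\varepsilon_t$ also lies in $\mathcal Q_{\widehat\varepsilon_t}$, since the excess clean mass can be absorbed into the arbitrary $H$. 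The check gives $\mathbb E[e_t(Y_{i,t})\mid\mathcal F_{t-1},\mathcal A]\le1$ on $G$. Define the stopped product $\tilde M_{i,t}=M_{i,t}\mathbf 1_G$. Since $G\in\sigma(\mathcal A)$, this is a nonnegative supermartingale under the enlarged filtration $\mathcal F_t\vee\mathcal A$. Optional stopping then gives $\mathbb E[E_i\mathbf 1_G]\le1$ for every null $i$ and every stopping time $\tau$. Unobserved candidates keep $E_i=1$.

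The third step is the FDR bound. The e-BH self-consistency bound from Theorem~\ref{thm:cads}, applied at level $q-\delta$ on the complete family of size $m$, gives pointwise
\[
\mathrm{FDP}\le\frac{q-\delta}{m}\sum_{i\in\mathcal N}E_i .
\]
Splitting on $G$,
\[
\mathrm{FDR}\le\mathbb E[\mathrm{FDP}\,\mathbf 1_G]+\Pr(G^c)\le\frac{q-\delta}{m}\,|\mathcal N|+\delta\le q,
\]
where the first term uses $\mathbb E[E_i\mathbf 1_G]\le1$ and the second uses $\mathrm{FDP}\le1$ on $G^c$.

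The main obstacle is the second step. The conditioning has to be legitimate: $\widehat\varepsilon_t$ depends on audit flags, which may be interleaved in time with test reads. The mismatch condition must then hold conditionally on all of $\mathcal A$, including future flags; this is exactly why \eqref{eq:mismatch} conditions on $\mathcal A$ and why independence is assumed rather than separate streams. I would first verify that the clean component's stochastic-dominance argument from Theorem~\ref{thm:increment} is unaffected by conditioning on $\mathcal A$, which follows from the independence assumption. I would also treat the case $\widehat\varepsilon_t=1$ separately: there the cap is $C^\ast=1$ and $e\equiv1$ trivially.
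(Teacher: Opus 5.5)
Your proposal is correct and follows essentially the paper's proof. Both arguments use the same steps: the coverage event from Ville's inequality; the chain $\varepsilon_{\rm test}\le\varepsilon_{\rm audit}+\rho\le U_{t-1}/\kappa+\rho$ on that event; validity of the over-rated cap after conditioning on the entire audit sequence, which uses full audit/test independence; and e-BH at $q-\delta$ with $\delta$ added for the complement. Your $\mathcal Q_{\varepsilon}$-nesting and the stopped process $M_{i,t}\mathbf 1_{G}$ under $\mathcal F_t\vee\sigma(\mathcal A)$ are just tidier forms of the paper's over-rating proposition and its ``condition on the audit path'' step. One small slip: at $\widehat\varepsilon_t=1$ the increment is $\min\{L,1\}$, not identically one, though $e\le1$ is all you need there.
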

On the simultaneous coverage event, $p\le U_t$, while
$p\ge\kappa\varepsilon_{\mathrm{audit}}$; hence every cap upper-bounds test contamination before it is
used. Theorem~\ref{thm:cads} then gives conditional FDR at most $q-\delta$; the complement contributes
at most $\delta$, and the full proof with its audit confidence sequence is in Appendix~A. This theorem makes no claim for undetectable corruptions without a justified
$\kappa$, nonstationary target-specific channels beyond $\rho$, or controls selected after seeing
their outcomes.

\subsection{Where $\kappa$ Comes From}
\label{sec:kappa}

A $\kappa$ that is merely assumed would assume away the problem, so we do not assume one. Nor do we
claim to recover $\varepsilon$ in general: the contamination fraction of an unrestricted Huber
neighbourhood is not identifiable when the contaminating law is unspecified
\citep{chen2016general}, and adapting to an unknown fraction carries a proven
price \citep{luo2024adaptive}. We therefore declare the contamination class in advance and compute
$\kappa$ from it. Let the channel flag a read iff $y>\tau$, and declare
$\mathcal H_\Delta=\{h:\operatorname{supp}(h)\subseteq[\Delta,\infty)\}$, i.e.\ corruption displaces a
read upward by at least $\Delta$.

\begin{proposition}[Derived flag sensitivity]
\label{prop:kappa}
Let independent $Z\sim N(0,s^2)$ and $b\sim h$ define a corrupted read $y=Z+b$, and write
$\kappa_h=\Pr(y>\tau\mid\text{corrupted})=\mathbb E_b[g(b)]$ with $g(b)=1-\Phi((\tau-b)/s)$. Since
$g'(b)=\phi((\tau-b)/s)/s>0$, the integrand is increasing, so $\kappa_h\ge g(\Delta)$ for every
$h\in\mathcal H_\Delta$ and
\[
\inf_{h\in\mathcal H_\Delta}\kappa_h=\kappa(\Delta,\tau,s)=1-\Phi\!\left(\frac{\tau-\Delta}{s}\right),
\]
attained, not merely approached, at the degenerate $h=\delta_\Delta$, which the class contains.
\end{proposition}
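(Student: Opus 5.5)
The plan is to reduce the infimum over the class $\mathcal H_\Delta$ to a pointwise minimization of the integrand $g$. Fix $h\in\mathcal H_\Delta$ and write $\kappa_h=\Pr(Z+b>\tau)$. We condition on $b$ and use the independence of $Z$ and $b$; this gives $\Pr(Z+b>\tau\mid b)=\Pr(Z>\tau-b)=1-\Phi((\tau-b)/s)=g(b)$. Fubini (or the tower property) then yields $\kappa_h=\mathbb E_b[g(b)]$, and this step is legitimate because $0\le g\le1$ is bounded and measurable.

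Next comes monotonicity. Differentiating gives $g'(b)=\phi((\tau-b)/s)/s$, which is strictly positive for all real $b$ when $s>0$. So $g$ is strictly increasing, and in particular $g(b)\ge g(\Delta)$ for every $b\ge\Delta$. Since $\operatorname{supp}(h)\subseteq[\Delta,\infty)$, we have $b\ge\Delta$ $h$-almost surely. Taking expectations gives $\kappa_h\ge g(\Delta)=1-\Phi((\tau-\Delta)/s)$ uniformly over the class, so the infimum is at least this value.

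For attainment, observe that $\delta_\Delta$ has support $\{\Delta\}\subseteq[\Delta,\infty)$ and therefore belongs to $\mathcal H_\Delta$. Its value is $\kappa_{\delta_\Delta}=g(\Delta)$, so the lower bound is achieved and the infimum is a minimum with the stated closed form. One can optionally add that strict monotonicity makes the minimizer unique: any $h$ putting positive mass on $(\Delta,\infty)$ gives a strictly larger value.

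The main point of care, rather than a real obstacle, is the conditioning step. It uses that the clean noise $Z$ is independent of the displacement $b$ and has exactly the law $N(0,s^2)$, the same $s$ as in C4. If the corrupted read were instead generated as an arbitrary draw from $h$ replacing the read, rather than added to it, the formula $\kappa_h=\mathbb E_b[g(b)]$ would change. I would state explicitly that the proposition concerns the additive displacement model. Otherwise the argument is elementary.
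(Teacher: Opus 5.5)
Your proposal is correct and follows the paper's argument. You condition on $b$ to get $\kappa_h=\mathbb E_b[g(b)]$, use $g'>0$ together with $\operatorname{supp}(h)\subseteq[\Delta,\infty)$ to obtain the uniform lower bound $g(\Delta)$, and observe that $\delta_\Delta\in\mathcal H_\Delta$ attains it; your explicit tower-property justification and the strict-monotonicity uniqueness remark are valid refinements that the paper leaves implicit.
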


$\kappa$ is thus a property of the declared channel, computable before any data arrive, and the
declaration is falsifiable. What makes it a declaration rather than a formality is that $\Delta$ must
be finite: every finite $b$ is flagged with positive probability $g(b)>0$, so any class bounded below
admits a positive $\kappa$, and sensitivity collapses to zero only for a class that permits corruption
arbitrarily far below $\tau$. Such a class buys nothing ($\kappa\to0$ sends
$\widehat\varepsilon_t\to1$ and the cap to one, preserving validity and destroying power), which is
the honest failure mode.

What CADS removes is the free sensitivity constant, not the declared
corruption-support assumption behind it: $\kappa$ is derived conditional on
$\mathcal H_\Delta$, and neither $\Delta$ nor $\rho$ is estimated from data
here.

\paragraph{Paired measurement targets.}
\label{sec:measurement-targets}
To measure proposer contribution, the full loop and an otherwise identical
proposer-removed loop each run CADS independently. Each declared set is
therefore separately FDR-controlled. The proposer's \emph{net contribution} is
the paired, post-hoc difference in oracle-scored true declarations between
those loops; it is an estimand, not itself an FDR-controlled set. We distinguish
this net from the number of declarations first originated by the proposer,
because proposer choices can displace fallback discoveries. Prompt effects are
separate paired contrasts between fixed prompt conditions.

\section{Reference-Backed Evaluation}
\label{sec:experiments}

\subsection{Experimental Protocol}

\paragraph{Frozen protocol.}
We evaluate CADS in sequential simulations backed by public biological reference effects; they are
not prospective wet-lab campaigns. The primary registry contains $6000$ genes from DepMap 26Q1
ACH-000446, with $u_i=-\mathrm{effect}_i-0.5$ and reference hit $u_i>0$
\citep{tsherniak2017depmap,dempster2021chronos}. A label-free random priority introduces 500 fresh
candidates, and each of 15 continuation rounds re-observes the top 70\% of observed candidates before
adding another 500. A clean read is $Y\sim\mathcal N(u_i,0.5^2)$; independently with probability
$\varepsilon$, an upward value drawn uniformly from $[2,4]$ is added. We use working alternative
$\theta=0.7$. Terminal decisions use the complete registry. Primary grids use 1000 paired seeds and
additional settings 200, each mean carrying an MC-SE and a 95\% Monte Carlo interval in the released
seed-level artifacts.

The cross-setting evaluation uses the same code and parameters on one T-cell Perturb-seq
DOP1B/Rest reference \citep{zhu2025tcellperturb}, one PRISM ACH-000827 compound-sensitivity reference
\citep{corsello2020prism}, and six DepMap 24Q4 cell-line references with dependency-probability labels.
Effects and labels define the signal backbone and evaluation oracle only; repeated reads, adaptive
allocation, and corruption are simulated.

\paragraph{Comparators.}
Naive BH uses terminal Gaussian $p$-values from the adaptively collected reads; it is a failure
diagnostic, not a valid adaptive baseline. Uncapped LR e-BH uses a valid e-process only at
$\varepsilon=0$. The close robust comparator follows Zhou--Liu Eq.~(5) and Eq.~(6)
\citep{zhou2026robust}. We report two distinct versions: (i) its robust increment accumulated on the
matched CADS acquisition schedule, labeled ``Zhou--Liu evidence,'' and (ii) a literal implementation
of published Algorithm~2, including its prescribed initial round-robin of
$\lceil(4/\varepsilon)\log(4K/q^2)\rceil=588$ pulls per arm, or $3{,}528{,}000$ reads at $K=6000$.
The first is not mislabeled as their sampling algorithm. At our matched budget of $45{,}899$ reads that
round-robin is $77$-fold incomplete. We therefore treat both the literal Algorithm~2 and the
terminal-evidence diagnostic, which applies e-BH regardless of the schedule, as budget-compatibility
checks rather than evidence of general dominance.

\subsection{Certificate Results}

We first test CADS at the frozen operating point and across the registered grid, then expose
the power cost and the assumptions on which the guarantee depends.

\begin{figure}[!t]
\centering
\includegraphics[width=\textwidth]{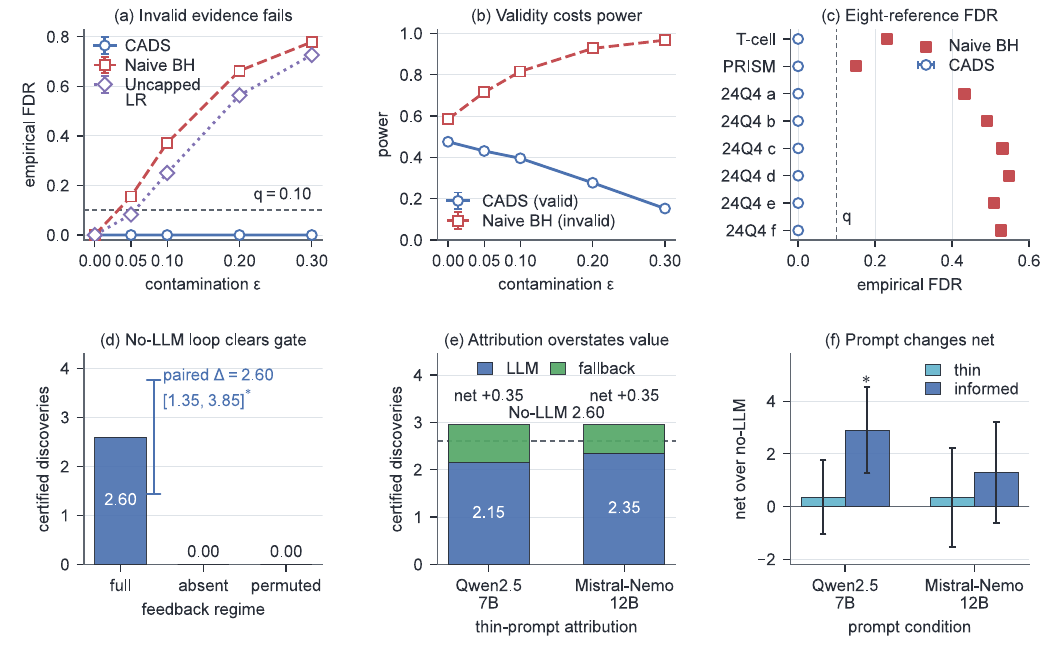}
\caption{CADS validity and proposer measurement. The top row uses the $m{=}6{,}000$
sweep registry ($q{=}0.10$; means and 95\% Monte Carlo intervals over 1000 paired campaigns in
(a)--(b) and 200 in (c)). \textbf{(a)}~Naive BH and uncapped LR e-BH exceed the target as contamination
grows, whereas CADS stays near zero. \textbf{(b)}~CADS pays for validity in power; naive BH's higher
apparent power is invalid. \textbf{(c)}~CADS remains below $q$ across one T-cell Perturb-seq, one
PRISM, and six DepMap 24Q4 references, whereas naive BH does not (identifiers in Appendix~D).
The bottom row uses a separate $m{=}18{,}159$ registry with 20 paired seeds and is never compared
across the registry boundary. \textbf{(d)}~The no-LLM loop clears the pre-specified feedback gate
($2.60$ vs.\ $0.00$, paired $[1.35,3.85]$). \textbf{(e)}~Origin-attributed discoveries
($2.95$) overstate the net over the identical no-LLM loop ($+0.35$) because LLM picks displace
fallback discoveries. \textbf{(f)}~An informed prompt moves Qwen from $+0.35$ to $+2.90$
($^{\ast}$ excludes zero); Mistral-Nemo moves in the same direction without excluding zero.
The $50$-seed replication in Appendix~G does not reproduce this initial magnitude. Zhou--Liu
evidence is omitted from (a)--(b) because it makes no discovery and is nonzero only for PRISM in (c)
(power $0.127$).}
\label{fig:results}
\end{figure}

\begin{table}[t]
\centering
\small
\begin{tabular}{lrrrrrr}
\toprule
Method & FDR & MC-SE & Power & MC-SE & True & False \\
\midrule
CADS & $1.33\times10^{-4}$ & $2.28\times10^{-5}$ & 0.396 & $3.76\times10^{-4}$ & 245.7 & 0.033 \\
Naive BH & 0.372 & $4.28\times10^{-4}$ & 0.816 & $4.40\times10^{-4}$ & 506.9 & 300.8 \\
Uncapped LR e-BH & 0.251 & $4.91\times10^{-4}$ & 0.711 & $4.92\times10^{-4}$ & 441.8 & 147.9 \\
Zhou-Liu evidence & 0.000 & 0.000 & 0.000 & 0.000 & 0.000 & 0.000 \\
\bottomrule
\end{tabular}
\caption{Primary operating point ($\varepsilon{=}0.10$, $q{=}0.10$, 1000 paired seeds). FDR is mean
FDP; MC-SE is across campaigns. True/false are mean terminal discovery counts; Appendix~D treats
all-zero uncertainty.}
\label{tab:primary}
 \end{table}

\paragraph{CADS controls FDR at the frozen operating point.}
At the primary point (Table~\ref{tab:primary}), CADS returns $245.7$ true discoveries and $0.033$ false
discoveries on average,
with power $0.396$. Naive BH has higher apparent power ($0.816$) but averages $300.8$ false
declarations, yielding FDR $0.372$. Uncapped LR e-BH yields FDR $0.251$. Across the entire
$5\times3$ $(\varepsilon,q)$ grid, CADS's largest empirical FDR is $6.73\times10^{-4}$ (MC-SE
$5.55\times10^{-5}$, at $\varepsilon{=}q{=}0.20$), while power decreases from $0.476$ at
$\varepsilon=0$ to $0.152$ at $\varepsilon=0.30$ for $q=0.10$ (Fig.~\ref{fig:results}(a,b)). This is a
worst-case-validity cost, not evidence that contamination improves discovery.

\paragraph{Control persists across reference settings.}
Across the same eight settings (one T-cell Perturb-seq, one PRISM, and six DepMap 24Q4 lines),
CADS FDR ranges from $1.21\times10^{-4}$ to $4.51\times10^{-4}$ and power from $0.189$ to $0.741$.
Naive BH FDR ranges from $0.151$ to $0.548$; uncapped LR e-BH ranges from $0.088$ to $0.402$
(Fig.~\ref{fig:results}(c); the per-reference table and interval dot plot are in Appendix~D). The
result is cross-reference robustness under one frozen simulator, not eight independent wet-lab
validations.

\paragraph{Validity has a power cost.}
At the matched budget, both Zhou--Liu checks make zero discoveries on $200$ seeds. This is budget
incompatibility in our regime, not general dominance of CADS. CADS itself loses power as the
contamination budget grows, as Figure~\ref{fig:results}(b) and the primary-grid results above show.

\paragraph{The guarantee depends on the contamination model.}
The frozen protocol injects one member of \eqref{eq:nullfamily}, and a kind one.
Appendix~F runs six adversarial alternatives at the same marginal rate,
including one fired only on candidates the agent chose to re-read; the
certificate holds for all of them, at worst FDR $0.0095$. A one-coin-per-campaign
all-or-nothing channel keeps the marginal rate but breaks the conditional one,
raising FDR two orders of magnitude to $0.0474$, still below $q$ here but with
the bound gone. At $\bar\varepsilon=\varepsilon$, a null mean $0.05$ above the
assumed boundary, a null standard deviation underestimated by ten percent, or
a same-mean, same-variance null with kurtosis $9.7$ breaks the budget
(Appendix~E). Setting $\bar\varepsilon=0.20$ absorbs a $0.10$ location shift or
a $1.5\times$ scale error; beyond the boundary, empirical FDR is $0.0126$ at a
$0.30$ shift, but we claim the guarantee rather than this realized margin.

At our audit channel ($\tau=1$, $s=0.5$) with $\Delta=2$, which contains the
simulator's $\mathrm{Uniform}(2,4)$ corruption,
Proposition~\ref{prop:kappa} gives $\kappa=0.977$, whereas the realized
sensitivity of that particular $h$ is $0.998$; declaring the class rather than
the law costs $2.1\%$ of sensitivity. The same detector has clean false-alarm
rate $\alpha_R=0.023$, which we simulate rather than idealize away: clean flags
inflate $p$ and make $\widehat\varepsilon_t$ more conservative.

\paragraph{Conditional audits recover only part of the lost power.}
Under the assumptions of Theorem~\ref{thm:audit} ($\delta=0.01$, $\rho=0$, true contamination $0.10$),
blinded controls given only the derived $\kappa=0.977$ recover $57\%$ of the blind-to-oracle power gap
on DepMap ($|A|=1000$: power $0.276$ versus oracle $0.390$ and blind $0.126$; coverage $1.000$, FDR
$5.68\times10^{-6}$), with audit reads charged on top of the test budget rather than taken from it.
Short audits can be worse than not auditing: at $|A|=50$ power falls to $0.023$. Appendix~D gives the
audit-size and $(\kappa,\rho)$ grids, a boundary-assumption stress test rather than validation of
$\kappa$ or $\rho$.

\section{Measuring LLM Proposer Contribution}
\label{sec:measurement}

The certificate of Sections~\ref{sec:method}--\ref{sec:experiments} is our measuring instrument.
We now use it to ask what an open-weight LLM proposer contributes to a discovery loop, a question a raw
hit count cannot answer because the count mixes the proposer's contribution with the loop's and carries
no error guarantee. The paired targets are defined in
Section~\ref{sec:measurement-targets}; this section specifies and evaluates the
corresponding loops.

\subsection{Paired Evaluation Protocol}

Oracle labels withheld from every proposer score the paired net contribution
post hoc over seeds.
The measurement uses the full DepMap 26Q1 ACH-000446 registry ($m{=}18{,}159$; $1{,}822$
reference hits; the $6{,}000$-gene validity sweeps above are never compared across that boundary).
Each campaign runs $12$ rounds of batch $40$ ($480$
reads over $260$ observed candidates) at $\varepsilon{=}q{=}0.10$, on paired seeds $0$--$19$. A proposer
sees only identifiers and the protocol's summary of its own past readouts, which here is the partition
of the candidates it has seen into high- and low-reading identifier lists, never their values. When the
LLM returns too few valid unseen symbols, a seed-paired label-free random draw fills the remaining
slots, and a non-LLM policy re-observes a fixed fraction of the observed set each round. We run
Qwen2.5-7B-Instruct, Mistral-Nemo-Instruct-2407, Meta-Llama-3.1-8B-Instruct, and the comparison the
field omits: an identical \emph{no-LLM loop} whose proposer returns nothing.

\subsection{Measurement Results}

The certificate holds throughout: empirical FDR is $0.000$ in all eighteen model $\times$ prompt
$\times$ feedback cells, whereas naive BH on the same streams runs $0.129$--$0.628$.
The paired results then ask whether feedback, proposal origin, or prompt state identifies the
proposer's incremental contribution.

\paragraph{Feedback alone does not identify LLM contribution.}
The standard ablation asks whether full feedback beats absent (none) and permuted (readout-shuffled)
feedback under a pre-specified paired gate. It does, but it also does so with \emph{no LLM in the loop}:
the no-LLM loop certifies $2.60$ true discoveries under full feedback and $0.00$ under both absent and
permuted, with paired $95\%$ $t$ interval $[1.35,3.85]$ against each (Fig.~\ref{fig:results}(d)). Its
non-LLM re-selection policy responds to feedback on its own, re-observing candidates that read high
whether or not an LLM chose them, so the ablation does not by itself measure the LLM.

\paragraph{Origin counts overstate net contribution.}
The direct alternative attributes each certified discovery to the mechanism that first proposed it,
since re-observation deepens evidence but never originates. Under the thin prompt Qwen certifies $2.95$
discoveries per campaign, $2.15$ LLM-originated and $0.80$ from the fallback, which read alone says the
LLM originates the majority. But the LLM fills only $58$ of the $260$ fresh slots, so the fallback fills
$202$ and finds $0.80$; remove the LLM and it fills all $260$ and finds $2.60$. The $2.15$ originated
discoveries therefore \emph{displace} $1.80$ the fallback would have made, leaving a net of $+0.35$
$[-1.05,1.75]$ (Fig.~\ref{fig:results}(e)). Only the head-to-head control gives the net.

\paragraph{Prompt information changes delivery, not a stable net magnitude.}
The thin prompt never tells the model which candidates it has already tested, so it is scored a
duplicate for re-proposing a gene it was shown reads high. An \emph{informed} prompt that lists the
tested candidates and asks for unseen ones changes only the prompt. Under it Qwen stops repeating and
the measured contribution moves with it (Fig.~\ref{fig:results}(f); Appendix~G): on these $20$ seeds the net over
the no-LLM loop goes from $+0.35$ to $+2.90$ $[1.26,4.54]$ and the prompt effect is $+2.55$
$[1.40,3.70]$, both excluding zero. Mistral-Nemo moves the same way without excluding zero and
Llama-3.1-8B already nets $+2.80$ thin, with a further $+0.25$ whose interval includes zero, so
the pattern is model-dependent. Qwen carried the only interval above zero, so we designate it the primary contrast and
the rest exploratory; that designation follows the estimate rather than preceding it, and the starred
intervals are uncorrected. A paired percentile bootstrap ($10{,}000$ resamples) agrees (net
$[1.40,4.35]$, prompt effect $[1.50,3.60]$).
In all three models the informed prompt raises delivered-candidate counts. It fills $176$ of $260$ fresh slots against thin's $58$,
while per-slot reference-hit precision differs by $+0.010$ $[-0.037,0.057]$, an interval too wide to
call the two prompts' candidate quality equal. Holding every proposer to an identical per-round quota
on $50$ fresh seeds (Appendix~G) leaves that delivery gap as the only effect common to all three models
(thin misses $40.7$, $26.0$ and $13.1$ of the $55$ late slots). No model's prompt effect then excludes
zero (Qwen $+0.74$ $[-0.12,1.66]$, Mistral-Nemo $+0.16$, Llama $-0.64$; Holm $p\ge0.34$) and Qwen's
natural effect attenuates to $+1.10$ $[-0.10,2.28]$. That design equalises the quota requested, not
the delivery achieved, so we report the delivery difference as the one effect common to all three
models and leave the magnitude unresolved.

\paragraph{A fixed non-LLM prior remains strongest.}
A competent non-LLM baseline dwarfs these gains: a truth-isolated pan-essentiality prior (mean gene
effect across the DepMap 24Q4 lines other than the target, fixed before any readout) certifies $12.65$
at empirical FDR $0$, a net $+10.05$ $[8.69,11.41]$ (Appendix~G).

\section{Limitations}
\label{sec:limitations}

CADS assumes a registered candidate family, a terminal decision, a correctly specified one-sided
null, and a valid contamination bound. Misspecifying the working alternative reduces power;
misspecifying the null can invalidate control. Reference labels expose FDP only in simulation, while
Audit-CADS additionally needs randomized blinded controls, externally justified flag sensitivity,
and bounded test--audit mismatch. The theory permits optional stopping, but the experiments use fixed
rounds.

The empirical scope is three open-weight models in one semi-synthetic, DepMap-backed loop. No model's
prompt effect excludes zero on $50$ fresh seeds, so the evidence does not establish a stable model
ranking. The qualitative conclusions are concurrent with \citet{gupta2025bayesopt}, and pretraining
leakage remains untested. FDR control applies to each loop's terminal set, not to between-loop or
prompt-effect contrasts. Those contrasts are descriptive; the Qwen contrast was selected after
inspecting the estimates and is uncorrected. Registry, fallback, prompt, or delivery changes alter
the estimand.

\section{Conclusion}
\label{sec:conclusion}

CADS measures proposer contribution by applying the same certified decision rule to full and
proposer-removed loops. In the tested screen, feedback gates and origin counts overcredit the LLM
because re-selection and fallback can also produce discoveries, while a fixed non-LLM prior remains
strongest. Prompt history initially increased Qwen's net contribution, but this effect did not
persist in the $50$-seed matched-quota replication. The comparison remains configuration-specific:
its guarantee certifies each terminal discovery set, not the difference between loops, and changes
to the registry, fallback, prompt, or delivered-candidate count alter the measured contrast.
Sequential screens should therefore report certified net gain over matched no-LLM baselines,
distinguish delivery from candidate quality, and treat cross-loop contrasts as measurements rather
than FDR-controlled discoveries.

\noindent\textbf{AI-use disclosure.} AI assisted editing, engineering, and
review; authors verified all content and remain responsible.
\clearpage
\begin{center}
{\Large\bfseries Supplementary Material\par}
\vspace{0.45em}
{\normalsize CADS: Measuring the Incremental Contribution of LLM Proposers in Sequential Biological Screens\par}
\end{center}
\vspace{0.5em}

\noindent\textbf{Supplement map.}
Appendix~A gives complete proofs. Appendices~B--C specify implementation,
truth isolation, and the frozen protocol. Appendices~D--G report corrected
results, null misspecification, excluded contamination channels, and proposer
contribution analyses. Appendix~H documents reproducibility and artifact
integrity.

\appendix
\setcounter{theorem}{0}
\setcounter{proposition}{0}
\setcounter{lemma}{0}
\setcounter{figure}{0}
\setcounter{table}{0}
\renewcommand{\thetheorem}{A.\arabic{theorem}}
\renewcommand{\theproposition}{A.\arabic{proposition}}
\renewcommand{\thelemma}{A.\arabic{lemma}}
\numberwithin{equation}{section}
\renewcommand{\theHtheorem}{supp.\arabic{theorem}}
\renewcommand{\theHproposition}{supp.\arabic{proposition}}
\renewcommand{\theHlemma}{supp.\arabic{lemma}}
\renewcommand{\thefigure}{S\arabic{figure}}
\renewcommand{\thetable}{S\arabic{table}}
\section{Complete Proofs}
\label{sec:proofs}

We use the notation of the main paper. The registered family is
$\mathcal H=\{H_i:i\in[m]\}$, and $\mathcal N\subseteq[m]$ denotes its true nulls.
Conditional on the past, a null read has a law
\begin{equation}
Q=(1-\varepsilon_0)\mathcal N(\mu,s^2)+\varepsilon_0 H,
\qquad \mu\le0,\quad \varepsilon_0\le\bar\varepsilon,
\label{eq:supp-null}
\end{equation}
where $H$ is arbitrary. The cap is constructed for the conservative rate
$\bar\varepsilon$. The working alternative is $\mathcal N(\theta,s^2)$ for fixed
$\theta>0$.

\subsection{Capped likelihood-ratio increment}

Let $f_0$ and $f_1$ be the densities of $\mathcal N(0,s^2)$ and
$\mathcal N(\theta,s^2)$, respectively, and write
\begin{equation}
L(y)=\frac{f_1(y)}{f_0(y)}
=\exp\!\left\{\frac{\theta y}{s^2}-\frac{\theta^2}{2s^2}\right\}.
\label{eq:supp-lr}
\end{equation}
For $0<\bar\varepsilon<1$, define
\begin{equation}
G_{\bar\varepsilon}(C)
=(1-\bar\varepsilon)\mathbb E_{0}[\min\{L,C\}]
 +\bar\varepsilon C.
\label{eq:supp-budget}
\end{equation}

\begin{lemma}[Cap and Gaussian closed form]
\label{lem:cap}
There is a unique $C^\ast>1$ satisfying
$G_{\bar\varepsilon}(C^\ast)=1$. If
$t(C)=s^2\log(C)/\theta+\theta/2$, then
\begin{equation}
\mathbb E_0[\min\{L,C\}]
=\Phi\!\left(\frac{t(C)-\theta}{s}\right)
+C\left\{1-\Phi\!\left(\frac{t(C)}{s}\right)\right\}.
\label{eq:supp-closed}
\end{equation}
\end{lemma}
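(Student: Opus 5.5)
The plan has two parts: first the closed form \eqref{eq:supp-closed}, then existence and uniqueness of $C^\ast$ by studying $G_{\bar\varepsilon}$ as a function of $C\in[1,\infty)$.

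\emph{Closed form.} From \eqref{eq:supp-lr}, $L(y)\ge C$ exactly when $y\ge t(C)$, with $t(C)=s^2\log(C)/\theta+\theta/2$. I will split the expectation at $t(C)$:
\[
\mathbb E_0[\min\{L,C\}]=\int_{-\infty}^{t(C)}L f_0\,dy+C\Pr_0(Y>t(C)).
\]
On the first piece, $Lf_0=f_1$, so it equals $\Pr_{f_1}(Y\le t(C))=\Phi((t(C)-\theta)/s)$. The second piece equals $C\{1-\Phi(t(C)/s)\}$. This gives \eqref{eq:supp-closed} directly.

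\emph{Properties of $G_{\bar\varepsilon}$.} Set $g(C)=\mathbb E_0[\min\{L,C\}]$ for $C\ge1$. I will show three facts about $g$:
\begin{itemize}
\item[(i)] $g$ is continuous and nondecreasing. This follows from monotone and dominated convergence, since $\min\{L,C\}\le L$ and $\mathbb E_0L=1$.
\item[(ii)] $g(C)<1$ for every finite $C$. The reason is that $L-\min\{L,C\}>0$ on the set $\{y>t(C)\}$, which has positive $f_0$-probability.
\item[(iii)] $g(C)\uparrow1$ as $C\to\infty$, by monotone convergence.
\end{itemize}
For uniqueness I will use the derivative. Differentiating the split integral in $C$ (the boundary terms cancel because $L=C$ at $t(C)$) gives $g'(C)=\Pr_0(L>C)\in(0,1)$. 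Hence
\[
G'_{\bar\varepsilon}(C)=(1-\bar\varepsilon)\Pr_0(L>C)+\bar\varepsilon>0,
\]
so $G_{\bar\varepsilon}$ is strictly increasing and continuous on $[1,\infty)$.

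\emph{Locating the root.} At $C=1$, fact (ii) gives
\[
G_{\bar\varepsilon}(1)=(1-\bar\varepsilon)g(1)+\bar\varepsilon<(1-\bar\varepsilon)+\bar\varepsilon=1.
\]
As $C\to\infty$, $G_{\bar\varepsilon}(C)\ge\bar\varepsilon C\to\infty$ because $\bar\varepsilon>0$. The intermediate value theorem then gives a root $C^\ast>1$, and strict monotonicity makes it unique. No root can lie in $(0,1]$: there $\min\{L,C\}\le C$, so $G_{\bar\varepsilon}(C)\le C\le1$, with equality only if $\min\{L,1\}=1$ $f_0$-almost surely, which is false.

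\emph{Main obstacle.} The only step needing care is the strict inequality $g(1)<1$. It requires $\Pr_0(L>1)>0$, which holds because $\Pr_0(L>1)=1-\Phi(\theta/(2s))>0$. Justifying the differentiation under the integral is routine. One can avoid it entirely by noting that $\min\{L,C\}$ is pointwise nondecreasing in $C$ and strictly increasing on a set of positive probability.
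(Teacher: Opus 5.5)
Your proof is correct and follows the paper's argument. The closed form comes from the same change of measure on $\{Y\le t(C)\}$, and the root is located by the same intermediate-value argument between $G_{\bar\varepsilon}(1)<1$ and $G_{\bar\varepsilon}(C)\ge\bar\varepsilon C\to\infty$; your derivative $g'(C)=\Pr_0(L>C)$ only makes explicit the continuity and strict monotonicity that the paper asserts without justification.
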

\begin{proof}
The likelihood ratio is increasing and $L(y)\le C$ exactly when $y\le t(C)$.
Changing measure on this event gives
\begin{align*}
\mathbb E_0[L(Y)\mathbf 1\{Y\le t(C)\}]
&=\int_{-\infty}^{t(C)}f_1(y)\,dy\\
&=\Phi\!\left(\frac{t(C)-\theta}{s}\right).
\end{align*}
The complementary contribution is
$C\Pr_0\{Y>t(C)\}$, proving \eqref{eq:supp-closed}.
The map $G_{\bar\varepsilon}$ is continuous and strictly increasing. Because
$\theta>0$, $\mathbb E_0[\min\{L,1\}]<\mathbb E_0[L]=1$, so
$G_{\bar\varepsilon}(1)<1$. Also
$G_{\bar\varepsilon}(C)\to\infty$ as $C\to\infty$ because
$\bar\varepsilon C\to\infty$. The intermediate value theorem and strict
monotonicity give the unique root.
\end{proof}

\begin{theorem}[Composite-null robust increment]
\label{thm:supp-increment}
Let $e_C(y)=\min\{L(y),C^\ast\}$, with $C^\ast$ from
Lemma~\ref{lem:cap}. For every law in \eqref{eq:supp-null},
\[
\mathbb E_Q[e_C(Y)]\le1.
\]
For $\bar\varepsilon=0$, the same statement holds with $e_C=L$ and no
contamination; for $\bar\varepsilon=1$, it holds with $e_C=\min\{L,1\}$.
\end{theorem}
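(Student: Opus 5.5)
The proof decomposes the law in \eqref{eq:supp-null} into its clean and contaminating parts and bounds each expectation separately. By linearity,
\[
\mathbb E_Q[e_C(Y)]=(1-\varepsilon_0)\,\mathbb E_{\mu}[e_C(Y)]+\varepsilon_0\,\mathbb E_H[e_C(Y)],
\]
where $\mathbb E_\mu$ denotes expectation under $\mathcal N(\mu,s^2)$. Three facts then do the work.

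\textbf{Clean component.} From \eqref{eq:supp-lr}, $L$ is nondecreasing in $y$ because $\theta>0$, so $e_C=\min\{L,C^\ast\}$ is also nondecreasing. For $\mu\le0$, $\mathcal N(\mu,s^2)$ is stochastically dominated by $\mathcal N(0,s^2)$. A direct coupling makes this explicit: write $Y=\mu+sZ\le sZ$ with $Z\sim\mathcal N(0,1)$. It follows that $\mathbb E_\mu[e_C]\le\mathbb E_0[e_C]=\mathbb E_0[\min\{L,C^\ast\}]$.

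\textbf{Contaminating component.} Since $e_C\le C^\ast$ pointwise, $\mathbb E_H[e_C]\le C^\ast$ for any $H$. No measurability issue arises because $e_C$ is bounded and nonnegative.

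\textbf{Combining the two.} Write $a=\mathbb E_0[\min\{L,C^\ast\}]$. The bounds above give $\mathbb E_Q[e_C]\le(1-\varepsilon_0)a+\varepsilon_0C^\ast$. This expression is affine in $\varepsilon_0$ with slope $C^\ast-a$. Lemma~\ref{lem:cap} gives $C^\ast>1$, and $a\le\mathbb E_0[L]=1$, so the slope is positive. Since $\varepsilon_0\le\bar\varepsilon$, the bound is therefore at most $G_{\bar\varepsilon}(C^\ast)=1$. This monotonicity in $\varepsilon_0$ is the one genuinely needed step, because the cap is solved at $\bar\varepsilon$ while the true rate may be smaller. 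The comparison $C^\ast>1\ge a$ settles it.

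\textbf{Edge cases.} For $\bar\varepsilon=0$ there is no contamination, so $\mathbb E_\mu[L]\le\mathbb E_0[L]=1$, again by monotonicity of $L$ and stochastic dominance. For $\bar\varepsilon=1$, the increment $\min\{L,1\}\le1$ holds pointwise, so the bound is trivial.

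I expect no real obstacle. The only points requiring care are (i) invoking stochastic dominance for an unbounded but monotone function, which is handled by the coupling or by nonnegativity with monotone convergence, and (ii) the monotonicity in $\varepsilon_0$ described above.
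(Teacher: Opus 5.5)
Your proof is correct and follows the paper's argument step for step: stochastic dominance of $\mathcal N(\mu,s^2)$ by $\mathcal N(0,s^2)$ for the clean part, the pointwise cap $e_C\le C^\ast$ for the contamination, and monotonicity of the affine bound in $\varepsilon_0$ up to $\bar\varepsilon$, with the same treatment of the endpoint cases. The only difference is how you sign the slope: the paper reads $\mathbb E_0[e_C]\le C^\ast$ directly off the pointwise cap, which is slightly more direct than your route through $C^\ast>1\ge a$, but both are valid.
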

\begin{proof}
Since $L$ is increasing, $e_C$ is increasing. For $\mu\le0$,
$\mathcal N(\mu,s^2)$ is stochastically dominated by
$\mathcal N(0,s^2)$, and hence
\begin{equation}
\mathbb E_{\mu}[e_C(Y)]\le \mathbb E_0[e_C(Y)].
\label{eq:supp-boundary}
\end{equation}
The arbitrary contamination component contributes at most $C^\ast$ because
$e_C\le C^\ast$ pointwise. Therefore
\[
\mathbb E_Q[e_C]
\le (1-\varepsilon_0)\mathbb E_0[e_C]+\varepsilon_0 C^\ast.
\]
The right side is nondecreasing in $\varepsilon_0$, since
$C^\ast\ge\mathbb E_0[e_C]$. It is thus at most its value at
$\bar\varepsilon$, which equals one by \eqref{eq:supp-budget}. The endpoint
cases follow directly from $\mathbb E_\mu[L]\le\mathbb E_0[L]=1$ and
$e_C\le1$, respectively.
\end{proof}

\begin{proposition}[Conservative contamination ratings are safe]
\label{prop:overrate}
Under all hypotheses of Theorem~\ref{thm:supp-increment}, an increment
constructed with $\bar\varepsilon$ remains valid whenever the actual
conditional contamination probability $\varepsilon_0$ is no greater than
$\bar\varepsilon$. Under-rating is not covered, and the rate bound alone does
not suffice: the clean-kernel condition is what makes $a=\mathbb E_0[e_C]$
finite and at most $C^\ast$.
\end{proposition}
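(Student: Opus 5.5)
The plan is to reduce Proposition~\ref{prop:overrate} to the single inequality in the proof of Theorem~\ref{thm:supp-increment}, isolating the quantity $a=\mathbb E_0[e_C]$ and showing it behaves as required. First, fix any law $Q=(1-\varepsilon_0)\mathcal N(\mu,s^2)+\varepsilon_0 H$ with $\mu\le0$ and $\varepsilon_0\le\bar\varepsilon$. Let $C^\ast$ be the root from Lemma~\ref{lem:cap} computed at the rating $\bar\varepsilon$, not at $\varepsilon_0$. I would split $\mathbb E_Q[e_C]$ into its clean and contaminated parts exactly as before. Monotonicity of $e_C$ and stochastic dominance \eqref{eq:supp-boundary} bound the clean part by $a$. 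The pointwise cap bounds the contaminated part by $C^\ast$. This gives
\[
\mathbb E_Q[e_C]\le \psi(\varepsilon_0),\qquad \psi(x)=(1-x)a+xC^\ast .
\]

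Next I would verify the two facts about $a$ that the proposition highlights. Finiteness holds because $0\le e_C\le C^\ast<\infty$ pointwise, which is also why the clean kernel must be the declared Gaussian family. The bound $a\le C^\ast$ follows from the same pointwise inequality. In fact $a<C^\ast$, since $L<C^\ast$ on $\{y<t(C^\ast)\}$, a set of positive $f_0$-mass. Then $\psi$ is affine with slope $C^\ast-a\ge0$, hence nondecreasing. For $\varepsilon_0\le\bar\varepsilon$ this gives $\psi(\varepsilon_0)\le\psi(\bar\varepsilon)=G_{\bar\varepsilon}(C^\ast)=1$ by \eqref{eq:supp-budget}. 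Taking the supremum over $\mu\le0$, over $H$, and over $\varepsilon_0\in[0,\bar\varepsilon]$ proves validity. The edge cases $\bar\varepsilon\in\{0,1\}$ are inherited from Theorem~\ref{thm:supp-increment}; at $\bar\varepsilon=0$ only $\varepsilon_0=0$ is allowed.

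For the negative clauses I would give explicit counterexamples rather than general arguments.

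\emph{Under-rating.} Take $\varepsilon_0>\bar\varepsilon$, $\mu=0$, and $H=\delta_{y_0}$ with $y_0\ge t(C^\ast)$. Then
\[
\mathbb E_Q[e_C]=\psi(\varepsilon_0)=1+(\varepsilon_0-\bar\varepsilon)(C^\ast-a)>1,
\]
and strictness uses $a<C^\ast$. This shows the rating bound cannot be relaxed.

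\emph{The rate bound alone does not suffice.} Keep $\varepsilon_0\le\bar\varepsilon$ but replace the clean kernel by one outside the declared family, for instance $\mathcal N(\mu',s^2)$ with $\mu'>0$. The clean expectation then exceeds $a$. Letting $\mu'\to\infty$ drives it to $C^\ast$, so the total expectation tends to $C^\ast>1$. Hence it exceeds one for large $\mu'$, even though the contamination rate is respected.

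The main obstacle is only making this last clause precise. I need to state that it is the clean-kernel condition C4, not the rate bound, that pins the clean term at $a$. Exhibiting a single misspecified kernel that violates the bound, as above, should suffice; I would check it numerically via \eqref{eq:supp-closed} if needed.
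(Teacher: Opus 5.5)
Your proposal is correct and follows the paper's proof: you bound the clean part by $a=\mathbb E_0[e_C]$ using monotonicity and stochastic dominance, bound the contaminated part by $C^\ast$, and then use that $a+\varepsilon_0(C^\ast-a)$ is nondecreasing in $\varepsilon_0$ and equals one at $\bar\varepsilon$. Your explicit counterexamples for the two negative clauses go beyond the paper, which only asserts them; in the misspecified-kernel example, fix $H$ cap-saturating (or set $\varepsilon_0=0$) so the limit really is $C^\ast$, since for arbitrary $H$ the limit is $(1-\varepsilon_0)C^\ast+\varepsilon_0\mathbb E_H[e_C]$, which need not exceed one.
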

\begin{proof}
This is the monotonicity step in the proof of
Theorem~\ref{thm:supp-increment}: for
$a=\mathbb E_0[e_C]\le C^\ast$, the worst-case expectation
$a+\varepsilon_0(C^\ast-a)$ is nondecreasing in $\varepsilon_0$.
\end{proof}

\subsection{Adaptive accumulation and complete-family e-BH}

Let $\mathcal F_t$ include all proposals, assay readouts, and internal
randomness available through round $t$. Let $A_{i,t}\in\{0,1\}$ indicate that
candidate $i$ is assayed at $t$; it is $\mathcal F_{t-1}$-measurable. Define
\begin{equation}
\begin{split}
M_{i,0}&=1,\\
M_{i,t}&=M_{i,t-1}
\{A_{i,t}e_C(Y_{i,t})+(1-A_{i,t})\}.
\end{split}
\label{eq:supp-process}
\end{equation}

\begin{lemma}[Predictable allocation preserves the e-process]
\label{lem:eprocess}
If the conditional null law satisfies \eqref{eq:supp-null} whenever a null
candidate is assayed, then $(M_{i,t})_{t\ge0}$ is a nonnegative
supermartingale for every $i\in\mathcal N$. At every possibly unbounded
stopping time $\tau$, setting $M_{i,\tau}:=\liminf_{n\to\infty}M_{i,\tau\wedge n}$ on
$\{\tau=\infty\}$, where nonnegative supermartingale convergence makes the limit exist
almost surely, $E_i=M_{i,\tau}$ is an e-value: $\mathbb E[E_i]\le1$.
\end{lemma}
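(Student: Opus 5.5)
The plan is to verify the supermartingale property one step at a time and then pass to the stopping time by optional stopping plus Fatou's lemma.

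\emph{One-step bound.} Fix $i\in\mathcal N$. By \eqref{eq:supp-process}, $M_{i,t}=M_{i,t-1}\{A_{i,t}e_C(Y_{i,t})+(1-A_{i,t})\}$. Both $M_{i,t-1}$ and $A_{i,t}$ are $\mathcal F_{t-1}$-measurable, and the factors are nonnegative, so conditional expectations are well defined in $[0,\infty]$ without integrability assumptions. Pulling out the known factors gives
\[
\mathbb E[M_{i,t}\mid\mathcal F_{t-1}]
=M_{i,t-1}\bigl\{A_{i,t}\,\mathbb E[e_C(Y_{i,t})\mid\mathcal F_{t-1}]+(1-A_{i,t})\bigr\}.
\]
On $\{A_{i,t}=1\}$, the hypothesis says the conditional law of $Y_{i,t}$ given $\mathcal F_{t-1}$ lies in \eqref{eq:supp-null}. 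Theorem~\ref{thm:supp-increment} applies to each such law, including history-dependent $\mu$, $H$ and $\varepsilon_0\le\bar\varepsilon$, and gives conditional expectation at most one. The bracket is therefore at most one, so $\mathbb E[M_{i,t}\mid\mathcal F_{t-1}]\le M_{i,t-1}$.

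\emph{Integrability.} Induction from $M_{i,0}=1$ gives $\mathbb E[M_{i,t}]\le1$, so $(M_{i,t})$ is a nonnegative supermartingale. Some care is needed in the conditioning step: the supremum bound in Theorem~\ref{thm:supp-increment} is over a family of laws, so I would apply it pointwise to a regular conditional distribution of $Y_{i,t}$ given $\mathcal F_{t-1}$. That distribution exists because $Y$ is real-valued. This is where the conditional form of \eqref{eq:supp-null} is used rather than a marginal rate.

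\emph{Stopping time.} For the bounded time $\tau\wedge n$, the optional stopping theorem for supermartingales gives $\mathbb E[M_{i,\tau\wedge n}]\le\mathbb E[M_{i,0}]=1$. By the nonnegative supermartingale convergence theorem, $M_{i,t}\to M_{i,\infty}$ almost surely. Hence $M_{i,\tau\wedge n}\to M_{i,\tau}$ almost surely: on $\{\tau<\infty\}$ the sequence is eventually constant, and on $\{\tau=\infty\}$ the limit coincides with the stated $\liminf$. Fatou's lemma then yields
\[
\mathbb E[E_i]=\mathbb E\bigl[\liminf_n M_{i,\tau\wedge n}\bigr]\le\liminf_n\mathbb E[M_{i,\tau\wedge n}]\le1.
\]

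\emph{Main obstacle.} The only delicate point is the measure-theoretic justification of the conditional application of Theorem~\ref{thm:supp-increment}, namely that the adaptive, possibly randomized allocation $A_{i,t}$ is predictable and does not alter the conditional null law of the revealed read. The standing assumption that $A_{i,t}$ is $\mathcal F_{t-1}$-measurable, with internal randomness included in $\mathcal F_{t-1}$, is exactly what is needed. The rest of the argument is standard.
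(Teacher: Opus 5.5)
Your proposal is correct and follows essentially the same route as the paper's proof. Predictability of $A_{i,t}$ together with Theorem~\ref{thm:supp-increment} gives $\mathbb E[M_{i,t}\mid\mathcal F_{t-1}]\le M_{i,t-1}$, optional stopping at the bounded time $\tau\wedge n$ gives $\mathbb E[M_{i,\tau\wedge n}]\le1$, and Fatou's lemma passes to $M_{i,\tau}$; your extra remarks (nonnegativity, a regular conditional distribution, eventual constancy on $\{\tau<\infty\}$) only make explicit what the paper leaves implicit.
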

\begin{proof}
For a null $i$, predictability of $A_{i,t}$ and
Theorem~\ref{thm:supp-increment} give
\[
\mathbb E[M_{i,t}\mid\mathcal F_{t-1}]
\le M_{i,t-1}.
\]
Thus the process is a nonnegative supermartingale from one. Optional stopping
applies to the bounded time $\tau\wedge n$, giving $\mathbb E[M_{i,\tau\wedge n}]\le1$
for every $n$; since $M_{i,\tau\wedge n}\to M_{i,\tau}$ almost surely with the
convention above, Fatou's lemma gives $\mathbb E[M_{i,\tau}]\le1$. Every campaign we
report stops at a fixed round count, so $\tau$ is bounded there and this step is a
statement of generality rather than one the experiments exercise.
No restriction is placed on how the proposer uses past readouts.
\end{proof}

At the terminal time, CADS forms a vector of length exactly $m$. An
unobserved candidate has $M_{i,\tau}=1$. Write the sorted evidence as
$E_{(1)}\ge\cdots\ge E_{(m)}$, and set
\[
r=\max\{k:E_{(k)}\ge m/(qk)\},
\]
with $r=0$ if the set is empty.

\begin{theorem}[Registered-family FDR control]
\label{thm:supp-fdr}
Let $E_1,\dots,E_m\ge0$ with $\mathbb E[E_i]\le1$ for every true null
$i\in\mathcal N$, as Lemma~\ref{lem:eprocess} supplies. Then the top $r$
candidates returned by complete-family e-BH satisfy
$\operatorname{FDR}\le q|\mathcal N|/m\le q$, under arbitrary dependence
among candidates. The null e-value hypothesis is not decorative: a single
deterministic $E_1=1/q$ on a true null is rejected and gives
$\operatorname{FDR}=1$.
\end{theorem}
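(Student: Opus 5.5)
The plan is to follow the standard e-BH argument of \citet{wang2022fdr}. It rests on one pointwise bound on the false discovery proportion, after which we take expectations. Write $R$ for the returned set, so $|R|=r$. If $r=0$, then $\mathrm{FDP}=0$ and there is nothing to prove. If $r\ge1$, every $i\in R$ satisfies $E_i\ge E_{(r)}\ge m/(qr)$, by the definition of $r$ and the sorting. (If there are ties at $E_{(r)}$, we take the top $r$ in any fixed order; every selected index still has $E_i\ge E_{(r)}$.) Rearranged, this gives the indicator bound $\mathbf 1\{i\in R\}\le qrE_i/m$ for each $i\in R$. Since $E_i\ge0$, the same bound holds trivially for $i\notin R$.

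We then sum the indicator bound over true nulls and divide by $|R|\vee1=r$:
\[
\mathrm{FDP}(R)=\frac{\sum_{i\in\mathcal N}\mathbf 1\{i\in R\}}{r\vee1}\le\frac{1}{r}\sum_{i\in\mathcal N}\frac{qr}{m}E_i\mathbf 1\{i\in R\}\le\frac{q}{m}\sum_{i\in\mathcal N}E_i .
\]
This is the pointwise bound quoted after Theorem~\ref{thm:cads}. It is deterministic, so it holds on every sample path regardless of how the $E_i$ depend on one another or on the data-dependent $r$. Taking expectations and using linearity together with $\mathbb E[E_i]\le1$ for $i\in\mathcal N$ (supplied by Lemma~\ref{lem:eprocess}) gives $\mathrm{FDR}\le q|\mathcal N|/m\le q$. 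No independence or positive-dependence condition enters, because linearity of expectation needs none.

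Two points need care. First, the threshold uses the full registered size $m$, and the unobserved candidates carry $E_i=1$, which is a legitimate e-value. So the sum runs over all true nulls in the complete family, and the final step $|\mathcal N|\le m$ holds. If e-BH were run only over the observed subset, the $m$ in the threshold would not match the $m$ in the bound. Second, the main obstacle is really only the cancellation of $r$, since $r$ is random. It is handled by keeping the indicator $\mathbf 1\{i\in R\}$ inside the sum before bounding it by $qrE_i/m$, so that no expectation of a ratio involving $r$ is ever taken.

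For the counterexample clause, take $m=1$ and $\mathcal N=\{1\}$ with the deterministic value $E_1=1/q$. Then $E_{(1)}=1/q\ge m/(q\cdot1)$, so $r=1$ and $H_1$ is rejected, giving $\mathrm{FDP}=1$ almost surely. Here $\mathbb E[E_1]=1/q>1$, so the e-value hypothesis fails, and this shows it is needed.
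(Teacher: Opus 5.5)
Your proposal is correct and follows the paper's proof essentially line for line: the indicator bound $\mathbf 1\{i\in R\}\le qrE_i/m$, summation over true nulls to the pointwise bound $\mathrm{FDP}(R)\le (q/m)\sum_{i\in\mathcal N}E_i$, and linearity of expectation with no dependence assumption. Your explicit $m=1$ instance for the counterexample clause is a small added precision that the paper's proof leaves unargued, since for $m>1$ a lone $E_1=1/q$ need not be rejected.
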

\begin{proof}
If $r>0$, every selected candidate satisfies $E_i\ge m/(qr)$, so
\[
\mathbf 1\{i\in R\}\le\frac{qr}{m}E_i.
\]
Summing this inequality over true nulls and dividing by $r$ gives the
pointwise relation
\[
\operatorname{FDP}(R)
\le\frac q m\sum_{i\in\mathcal N}E_i.
\]
The inequality is trivial when $r=0$. Taking expectations and applying
Lemma~\ref{lem:eprocess} proves the claim. This argument is the e-BH step of
\citet{xu2021bandit,wang2022fdr}; it uses no independence or positive
dependence assumption.
\end{proof}

The denominator $m$ is load-bearing. Replacing it by the random number of
observed candidates changes the rejection threshold after outcome-dependent
acquisition and is not justified by Theorem~\ref{thm:supp-fdr}.

\subsection{Conditional Audit-CADS}
\label{sec:supp-audit-proof}

Let $B_1,B_2,\ldots$ be blinded control flags. Conditional on fixed channel
parameters, assume they are i.i.d.\ Bernoulli with mean $p$ and independent of
test readouts. With a Beta$(a,b)$ mixing density, $a,b>0$, define
\begin{equation}
\mathcal M_n(p)=
\frac{\mathrm B(S_n+a,n-S_n+b)}
     {\mathrm B(a,b)\,p^{S_n}(1-p)^{n-S_n}},
\qquad S_n=\sum_{j=1}^n B_j.
\label{eq:supp-mixture}
\end{equation}
For $p\in(0,1)$ this is, at the true $p$, the mixture over Bernoulli
alternatives of likelihood ratios against $p$, and is therefore a nonnegative
martingale started at one. At the endpoints the likelihood ratio degenerates and
$\mathcal M_n$ is a nonnegative supermartingale started at one instead: at $p=0$
every flag is zero almost surely and $\mathcal M_n(0)=\mathrm
B(a,n+b)/\mathrm B(a,b)<1$ for $n\ge1$, and symmetrically at $p=1$. Ville's
inequality needs only the supermartingale property, so it applies throughout;
we state the martingale case because it is the one the interior uses. Ville
yields
\[
\Pr_p\{\exists n:\mathcal M_n(p)\ge1/\delta\}\le\delta.
\]
The implementation uses $a=b=1/2$, inverts
$\mathcal M_n(p)<1/\delta$, takes its upper endpoint, and then takes the
running intersection. Denote the resulting nonincreasing upper path by
$U_n$. Consequently,
\begin{equation}
\Pr\{p\le U_n\ \text{for all }n\}\ge1-\delta.
\label{eq:supp-cs}
\end{equation}

Assume each harmful audit corruption is flagged with conditional probability
at least $\kappa>0$; clean false flags are allowed. If
$\varepsilon_{\rm audit}$ is the audit corruption rate, then
$p\ge\kappa\varepsilon_{\rm audit}$. Also assume the pre-registered mismatch
condition
\begin{equation}
\begin{aligned}
\Pr\!\left(C_{i,t}=1\mid\mathcal F_{t-1},\mathcal A_\infty,A_{i,t}=1\right)
&\le\varepsilon_{\rm test}\\
&\le\varepsilon_{\rm audit}+\rho
\end{aligned}
\label{eq:supp-mismatch}
\end{equation}
almost surely for every null $i$ and round $t$,
for pre-registered $\rho\ge0$, where $C_{i,t}$ indicates corruption of read
$(i,t)$, $A_{i,t}$ that the read is taken, $\mathcal F_{t-1}$ is the test
history and $\mathcal A_\infty$ the whole audit sequence. The conditioning is
not decoration. A marginal rate is strictly weaker and does not suffice: a
channel that flips one coin once and then corrupts either every read or none
has marginal rate $\varepsilon_{\rm test}$ and satisfies any scalar version of
this condition, yet on its corrupted branch the null's increments all saturate
the cap and three of them already exceed $1/(q-\delta)$ at
$\varepsilon_{\rm test}=0.2$, $q=0.10$, $\delta=0.01$, so it is declared and FDR
approaches $0.2$. Condition~\eqref{eq:supp-mismatch} excludes exactly that, and
no audit can verify it; it is an assumption on the laboratory's channel.
Before test round $t$, CADS uses only earlier controls:
\begin{equation}
\widehat\varepsilon_t
=\min\{1,U_{t-1}/\kappa+\rho\}.
\label{eq:supp-auditcap}
\end{equation}

\begin{theorem}[Conditional Audit-CADS]
\label{thm:supp-audit}
Let $0<\delta<q<1$ and $\kappa>0$. Assume the hypotheses of Theorem~\ref{thm:supp-fdr}:
a family registered before the campaign, the declared clean-null family
\eqref{eq:supp-null} conditional on the past, predictable allocation, and e-BH on the
complete family at the terminal stopping time. Assume in addition the flag model,
independence of the entire audit flag sequence from the complete collection of test
potential outcomes and from all test-side randomization, and
\eqref{eq:supp-mismatch}. Then using the predictable caps
\eqref{eq:supp-auditcap} and complete-family e-BH at level $q-\delta$
controls unconditional FDR at $q$.
\end{theorem}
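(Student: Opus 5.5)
The plan is to split on the simultaneous coverage event $\mathcal G=\{p\le U_n\ \text{for all }n\}$ from \eqref{eq:supp-cs}, which has $\Pr(\mathcal G)\ge1-\delta$ and is measurable with respect to the audit sequence $\mathcal A_\infty$. The target decomposition is
\[
\operatorname{FDR}=\mathbb E\bigl[\operatorname{FDP}\,\mathbf 1_{\mathcal G}\bigr]+\mathbb E\bigl[\operatorname{FDP}\,\mathbf 1_{\mathcal G^c}\bigr]\le \mathbb E\bigl[\operatorname{FDP}\,\mathbf 1_{\mathcal G}\bigr]+\delta,
\]
using $\operatorname{FDP}\le1$ for the second term. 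It then remains to show that the first term is at most $q-\delta$.

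To handle the first term, I will work conditionally on the whole audit sequence $\mathcal A_\infty$. By the assumed independence of the audit flags from test potential outcomes and test-side randomization, conditioning on $\mathcal A_\infty$ does not change the conditional law of test reads given $\mathcal F_{t-1}$. It only fixes the path $(U_n)$, and hence the caps $\widehat\varepsilon_t$, as deterministic and predictable quantities. On $\mathcal G$, the flag model gives $\kappa\varepsilon_{\rm audit}\le p\le U_{t-1}$, so $\varepsilon_{\rm audit}\le U_{t-1}/\kappa$. Then \eqref{eq:supp-mismatch} gives, for every null $i$ and round $t$, a conditional corruption probability at most $\varepsilon_{\rm audit}+\rho\le\widehat\varepsilon_t$; the truncation at $1$ is harmless. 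Thus each null read, conditional on $(\mathcal F_{t-1},\mathcal A_\infty)$, has a law in \eqref{eq:supp-null} with $\bar\varepsilon=\widehat\varepsilon_t$.

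Because the cap $C^\ast(\widehat\varepsilon_t)$ is predictable, I will apply Theorem~\ref{thm:supp-increment} and Proposition~\ref{prop:overrate} round by round. This gives $\mathbb E[e_t\mid\mathcal F_{t-1},\mathcal A_\infty]\le1$, using the $\bar\varepsilon=1$ endpoint case when the cap truncates. Lemma~\ref{lem:eprocess} then makes $M_{i,t}$ a nonnegative supermartingale in the enlarged filtration. Theorem~\ref{thm:supp-fdr} at level $q-\delta$ should then give a conditional FDR of at most $q-\delta$.

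The main obstacle is that $\mathcal G$ concerns the unknown $p$, so "on $\mathcal G$" is not a stopping-time restriction. My first attempt is to work on the enlarged filtration $\mathcal F_t\vee\sigma(\mathcal A_\infty)$, in which $\mathcal G$ is known at time $0$. I will define a modified process that uses increment $1$ off $\mathcal G$. On $\mathcal G$, this process coincides with CADS, and it is an e-process unconditionally. Applying the pointwise e-BH bound,
\[
\operatorname{FDP}\,\mathbf 1_{\mathcal G}\le\frac{q-\delta}{m}\sum_{i\in\mathcal N}\tilde E_i,
\]
and taking expectations then yields $\mathbb E[\operatorname{FDP}\,\mathbf 1_{\mathcal G}]\le q-\delta$. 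Combining this with the $\delta$ bound on $\mathcal G^c$ completes the argument at $q$.
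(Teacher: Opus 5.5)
Your proposal is correct and follows the paper's proof. Both arguments split on the coverage event $\mathcal G$ and condition on the whole audit sequence using the full-strength audit/test independence. Both then chain $\varepsilon_{\rm test}\le\varepsilon_{\rm audit}+\rho\le U_{t-1}/\kappa+\rho$ on $\mathcal G$, so that Proposition~\ref{prop:overrate} and Theorem~\ref{thm:supp-fdr} give $q-\delta$ there, and add $\Pr(\mathcal G^c)\le\delta$ via $\mathrm{FDP}\le1$. Your process $\tilde M$, held at one off $\mathcal G$ in the filtration $\mathcal F_t\vee\sigma(\mathcal A_\infty)$, is a rigorous rendering of the paper's ``conditionally on that path'' step rather than a different route.
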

\begin{proof}
Let $\mathcal G=\{p\le U_n\text{ for all }n\}$. On $\mathcal G$,
\[
\varepsilon_{\rm test}
\le\varepsilon_{\rm audit}+\rho
\le p/\kappa+\rho
\le U_{t-1}/\kappa+\rho
\]
before every test round, where $p\ge\kappa\varepsilon_{\rm audit}$ because a
corrupted control flags with probability at least $\kappa$ and clean controls
only add to $p$. The quantity so bounded is the \emph{conditional} corruption
probability of \eqref{eq:supp-mismatch}, not a marginal rate; this is what the
next step needs, and a marginal rate would not supply it. Proposition~\ref{prop:overrate}
then makes every test increment valid for the realized audit path, since its
hypothesis is exactly that the conditional null kernel lies in the ball the cap
was solved for. Independence is used here, and at its full strength: conditioning
on an audit path in $\mathcal G$ conditions on the entire flag sequence,
including flags that arrive later, so we need the audit sequence to be
independent of the test potential outcomes and test-side randomization, not
merely of the readouts that happen to be observed, since which readouts are
observed is itself audit-dependent through the caps. Under that assumption the
test-stream null expectations retain the bound required by
Lemma~\ref{lem:eprocess}.
Thus Theorem~\ref{thm:supp-fdr}, conditionally on that path, bounds FDR by
$q-\delta$. Since FDP is at most one,
\[
\mathbb E[\operatorname{FDP}]
\le(q-\delta)\Pr(\mathcal G)+\Pr(\mathcal G^c)
\le(q-\delta)+\delta=q.
\]
The single failure event is global; no factor of $m$ is introduced.
\end{proof}

The conclusion is conditional on identifiable links between controls and test
readouts. It does not cover an unknown $\kappa$, mismatch exceeding $\rho$,
target-specific nonstationarity, or corruptions that evade the flagger.

\section{Implementation and Truth-Isolation Contract}
\label{sec:implementation}

\paragraph{Four separated objects.}
The candidate registry contains identifiers only. The proposer view contains
identifiers and past structured numerical readouts. The certificate contains
the evidence code and never reads reference effects or labels. A private
evaluation object contains the public-reference effect and hit label and is
used only after terminal declarations to compute FDP and power.

\paragraph{Fallback behavior.}
The LLM adapter permits at most two generation attempts. It parses identifiers,
removes invalid and duplicate entries, and fills any shortfall by seeded
uniform sampling from eligible unseen identifiers. The fallback does not rank
by the private effect or label. Raw text, parsed identifiers, invalid entries,
duplicates, retry count, and fallback count are retained round by round.

\paragraph{Complete family.}
Both BH diagnostics and e-BH receive a vector of length $m$. Unobserved
hypotheses receive $p=1$ and $E=1$. Index bounds, duplicate indices,
non-finite evidence, negative evidence, and wrong-dimensional inputs raise
errors.

\paragraph{Numerical safeguards.}
The GP-UCB implementation uses a Cholesky solve and explicit finite checks.
Log evidence is clipped only when converted for a terminal floating-point
comparison; the per-read analytical cap is computed by a bracketed root solve.
A real GP-UCB rerun that emitted platform-level numerical warnings was not
used for any reportable result. The corrected reported grids use the frozen
label-free random priority plus outcome-dependent re-observation.

\paragraph{Statistical contract tests.}
The test suite includes cap-budget equality, arbitrary-contamination upper
bounds, conservative-cap validity, the Gaussian composite-null boundary,
predictability of audit caps, time-uniform audit coverage, full-family
accounting, truth-isolation/fallback checks, Zhou--Liu formula checks, and
non-finite GP-state rejection. The composite-null test evaluates several
$\mu<0$ values and confirms that $\mu=0$ is the least-favorable Gaussian
boundary.

\section{Frozen Experimental Protocol}
\label{sec:protocol}

\paragraph{Reference backbones.}
The primary registry is DepMap 26Q1 ACH-000446. Its threshold-centered signal
is $u_i=-\mathrm{effect}_i-0.5$, and the reference alternative is $u_i>0$.
The cross-setting evaluation contains one T-cell Perturb-seq DOP1B/Rest
registry, one PRISM ACH-000827 registry using the public sensitivity threshold,
and six DepMap 24Q4 registries using the released dependency-probability label.
Each loader asserts that reference nulls have $u_i\le0$ and reference
alternatives have $u_i\ge0$. These quantities form the signal backbone and
private evaluation oracle; they are not proposal inputs.

\paragraph{Sequential simulator.}
A clean read is
$Y_{i,t}\sim\mathcal N(u_i,0.5^2)$. Independently for every read, with
probability $\varepsilon$ an upward value sampled uniformly from $[2,4]$ is
added. This is one difficult contamination distribution inside the larger
adversarial family covered by the theorem. The evidence working alternative is
$\theta=0.7$.

A seeded standard-normal priority orders the registered candidates without
using $u_i$ or the labels. Round zero observes the first 500 candidates.
For each of 15 continuation rounds, the simulator re-observes the top 70\% of
currently observed candidates by their empirical mean and then introduces up
to 500 fresh candidates. Terminal evidence is evaluated on the complete
registry. The primary registry uses 45,899 reads per campaign.

\paragraph{Methods.}
\emph{CADS} uses the capped LR and full-family e-BH.
\emph{Uncapped LR e-BH} removes the cap and is valid only at zero
contamination.
\emph{Naive BH} applies terminal one-sided Gaussian $p$-values as if the
outcome-dependent sample sizes were fixed; it is a failure diagnostic.
\emph{Zhou--Liu evidence} accumulates their Eq.~(5) robust increment on the
same CADS acquisition schedule and then applies full-family e-BH. It is not
called their sampling algorithm. A separate reproduction implements their
published Algorithm~2.

\paragraph{Seeds and uncertainty.}
The primary $5\times3$ $(\varepsilon,q)$ grid uses the same 1000 seeds in
every cell. Each cross setting and the strict Zhou--Liu comparison use 200
seeds. FDR is the mean campaign FDP, including zero when a campaign has no
declaration. Power is the mean fraction of reference alternatives declared.
MC-SE is the sample standard deviation across campaigns divided by the square
root of their number. Figures use descriptive mean $\pm1.96$ MC-SE error
bars, clipped to $[0,1]$ for bounded metrics; JSON fields ending in
\texttt{normal\_errorbar95} are not confidence intervals. Audit-CADS FDR also
stores a distribution-free one-sided $95\%$ upper bound. When every FDP is
zero, the normal bar collapses to a point; read it as zero observed FDP, not
as a confidence statement. For a
$[0,1]$-valued FDP with an all-zero sample of size $n$, the distribution-free
one-sided $95\%$ upper bound on $\mathbb E[\mathrm{FDP}]$ is $1-0.05^{1/n}$,
which is $0.0149$ at $n{=}200$ and $0.0030$ at $n{=}1000$. Appendix~G reports
that bound directly for the follow-up experiment.

\section{Corrected Results}
\label{sec:results}

\subsection{Complete primary grid}

\begin{table}[t]
\centering
\small
\begin{tabular}{rrrrrr}
\toprule
$q$ & $\varepsilon$ & CADS FDR (SE) & CADS power (SE) & Naive FDR & LR FDR \\
\midrule
0.05 & 0.00 & $2.84\times10^{-5}$ ($1.00\times10^{-5}$) & 0.458 ($3.35\times10^{-4}$) & 0.000 & 0.000 \\
0.05 & 0.05 & $6.31\times10^{-5}$ ($1.57\times10^{-5}$) & 0.405 ($3.55\times10^{-4}$) & 0.131 & 0.073 \\
0.05 & 0.10 & $5.57\times10^{-5}$ ($1.60\times10^{-5}$) & 0.348 ($3.99\times10^{-4}$) & 0.337 & 0.232 \\
0.05 & 0.20 & $5.61\times10^{-5}$ ($1.98\times10^{-5}$) & 0.226 ($5.36\times10^{-4}$) & 0.640 & 0.548 \\
0.05 & 0.30 & $1.79\times10^{-5}$ ($1.79\times10^{-5}$) & 0.093 ($6.63\times10^{-4}$) & 0.768 & 0.718 \\
\midrule
0.10 & 0.00 & $6.16\times10^{-5}$ ($1.44\times10^{-5}$) & 0.476 ($3.32\times10^{-4}$) & 0.001 & 0.000 \\
0.10 & 0.05 & $1.60\times10^{-4}$ ($2.44\times10^{-5}$) & 0.431 ($3.54\times10^{-4}$) & 0.156 & 0.083 \\
0.10 & 0.10 & $1.33\times10^{-4}$ ($2.28\times10^{-5}$) & 0.396 ($3.76\times10^{-4}$) & 0.372 & 0.251 \\
0.10 & 0.20 & $1.77\times10^{-4}$ ($3.14\times10^{-5}$) & 0.277 ($6.63\times10^{-4}$) & 0.663 & 0.564 \\
0.10 & 0.30 & $9.12\times10^{-5}$ ($3.03\times10^{-5}$) & 0.152 ($5.27\times10^{-4}$) & 0.779 & 0.727 \\
\midrule
0.20 & 0.00 & $1.31\times10^{-4}$ ($2.03\times10^{-5}$) & 0.494 ($3.34\times10^{-4}$) & 0.002 & 0.000 \\
0.20 & 0.05 & $3.37\times10^{-4}$ ($3.42\times10^{-5}$) & 0.457 ($3.54\times10^{-4}$) & 0.186 & 0.094 \\
0.20 & 0.10 & $3.68\times10^{-4}$ ($3.69\times10^{-5}$) & 0.428 ($3.72\times10^{-4}$) & 0.412 & 0.270 \\
0.20 & 0.20 & $6.73\times10^{-4}$ ($5.55\times10^{-5}$) & 0.339 ($7.15\times10^{-4}$) & 0.688 & 0.580 \\
0.20 & 0.30 & $5.37\times10^{-4}$ ($6.18\times10^{-5}$) & 0.226 ($4.98\times10^{-4}$) & 0.791 & 0.734 \\
\bottomrule
\end{tabular}
\normalsize
\caption{Complete primary DepMap grid. CADS cells give mean (MC-SE)
over 1000 paired campaigns. Naive and uncapped-LR columns give mean FDR.
All decisions use the full $m=6000$ family.}
\label{tab:supp-grid}
\end{table}

At $\varepsilon=0$, CADS and uncapped LR e-BH coincide. As contamination
increases, the capped method becomes more conservative while the uncapped
e-process and naive $p$-value procedure cease to control FDR. The largest
observed CADS FDR in Table~\ref{tab:supp-grid} is
$6.73\times10^{-4}$ (MC-SE $5.55\times10^{-5}$), at
$\varepsilon=q=0.20$.

\subsection{Cross-reference settings}

\begin{figure}[t]
\centering
\includegraphics[width=\textwidth]{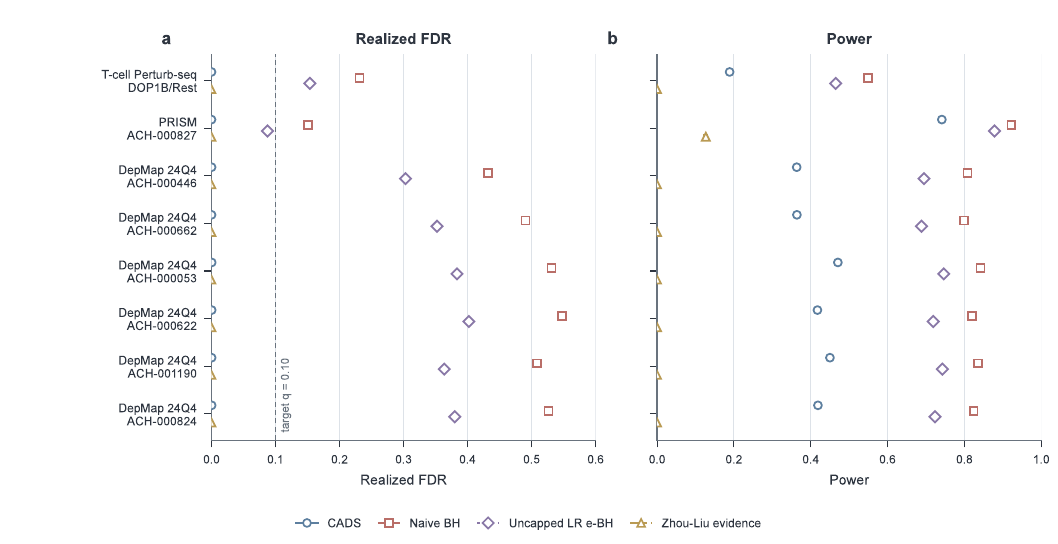}
\caption{Cross-setting operating points at
$\varepsilon=q=0.10$. Dots are seed means and bars are 95\% Monte Carlo
intervals over 200 paired campaigns. ZL is the matched-schedule
Zhou--Liu evidence diagnostic. The vertical dashed line in the FDR panel marks
$q=0.10$.}
\label{fig:supp-cross}
\end{figure}

\begin{table}[t]
\centering
\small
\resizebox{\textwidth}{!}{%
\begin{tabular}{lrrrrr}
\toprule
Reference dataset & Naive FDR & LR FDR & CADS FDR (SE) & CADS power (SE) & ZL power \\
\midrule
T-cell Perturb-seq DOP1B/Rest & 0.231 & 0.154 & $1.36\times10^{-4}$ ($6.03\times10^{-5}$) & 0.189 ($7.44\times10^{-4}$) & 0.000 \\
PRISM ACH-000827 & 0.151 & 0.088 & $1.21\times10^{-4}$ ($3.56\times10^{-5}$) & 0.741 ($5.86\times10^{-4}$) & 0.127 \\
DepMap 24Q4 ACH-000446 & 0.432 & 0.303 & $2.67\times10^{-4}$ ($7.39\times10^{-5}$) & 0.364 ($7.78\times10^{-4}$) & 0.000 \\
DepMap 24Q4 ACH-000662 & 0.490 & 0.352 & $2.04\times10^{-4}$ ($6.67\times10^{-5}$) & 0.364 ($9.18\times10^{-4}$) & 0.000 \\
DepMap 24Q4 ACH-000053 & 0.531 & 0.384 & $4.51\times10^{-4}$ ($7.77\times10^{-5}$) & 0.471 ($8.03\times10^{-4}$) & 0.000 \\
DepMap 24Q4 ACH-000622 & 0.548 & 0.402 & $4.31\times10^{-4}$ ($8.93\times10^{-5}$) & 0.417 ($9.75\times10^{-4}$) & 0.000 \\
DepMap 24Q4 ACH-001190 & 0.509 & 0.364 & $3.32\times10^{-4}$ ($8.19\times10^{-5}$) & 0.450 ($9.23\times10^{-4}$) & 0.000 \\
DepMap 24Q4 ACH-000824 & 0.527 & 0.380 & $3.87\times10^{-4}$ ($9.21\times10^{-5}$) & 0.419 ($9.37\times10^{-4}$) & 0.000 \\
\bottomrule
\end{tabular}%
}
\normalsize
\caption{Cross-setting results at $\varepsilon{=}q{=}0.10$ (200 paired
seeds per row). All procedures use the same complete-family decision code.
Values are seed means; Figure~\ref{fig:supp-cross} gives the dot plot with 95\%
intervals. ZL denotes the matched-schedule Zhou--Liu evidence diagnostic, not
Algorithm~2.}
\label{tab:supp-cross}
\end{table}

Figure~\ref{fig:supp-cross} and Table~\ref{tab:supp-cross} show the eight
settings reported in the main paper. CADS FDR is at most $4.51\times10^{-4}$ in
every setting, while power ranges from 0.189 to 0.741. These are cross-reference
simulations with one frozen readout model, not independent biological
replications.

\subsection{Closest robust bandit baseline}

The reproduced Zhou--Liu increment is
\begin{equation}
\begin{split}
e_{\rm ZL}(Y)
&=\frac{\exp\{\phi(\lambda(Y-\mu_0))\}}
{1+\lambda^2s^2/2+1.5\varepsilon},\\
\lambda&=\frac{\sqrt{\varepsilon}}{2s},
\end{split}
\label{eq:supp-zl}
\end{equation}
where $\phi$ is their four-piece Catoni influence function, bounded by
$\pm\log2$. Their Algorithm~2 begins with
\[
B(q,\varepsilon)
=\left\lceil\frac4\varepsilon
\log\!\left(\frac{4K}{q^2}\right)\right\rceil
\]
round-robin pulls per arm. For $K=6000$ and
$\varepsilon=q=0.10$, this is 588 pulls per arm.

\begin{table}[t]
\centering
\small
\begin{tabular}{lrrr}
\toprule
Zhou--Liu implementation & Required burn-in & Matched budget & Mean discoveries \\
\midrule
Published Algorithm 2 & 3,528,000 & 45,899 & 0.0 \\
Terminal Eq.~(5) evidence diagnostic & -- & 45,899 & 0.0 \\
\bottomrule
\end{tabular}
\normalsize
\caption{Strict Zhou--Liu budget comparison over 200 paired seeds.
The terminal evidence row is Eq.~\eqref{eq:supp-zl} plus e-BH at the matched
budget; it is not Algorithm~2.}
\label{tab:supp-zl}
\end{table}

Both rows in Table~\ref{tab:supp-zl} make zero discoveries. This result
documents a budget mismatch; it does not establish general superiority.
The original L\'evy-stable simulation description omits a distribution
parameter, and no bit-for-bit reproduction is claimed.

\subsection{Conditional Audit-CADS}

Figure~\ref{fig:supp-audit} shows the bound path as the audit grows, and
Table~\ref{tab:supp-audit} gives the corresponding numerical results.

\begin{figure}[t]
\centering
\includegraphics[width=\textwidth]{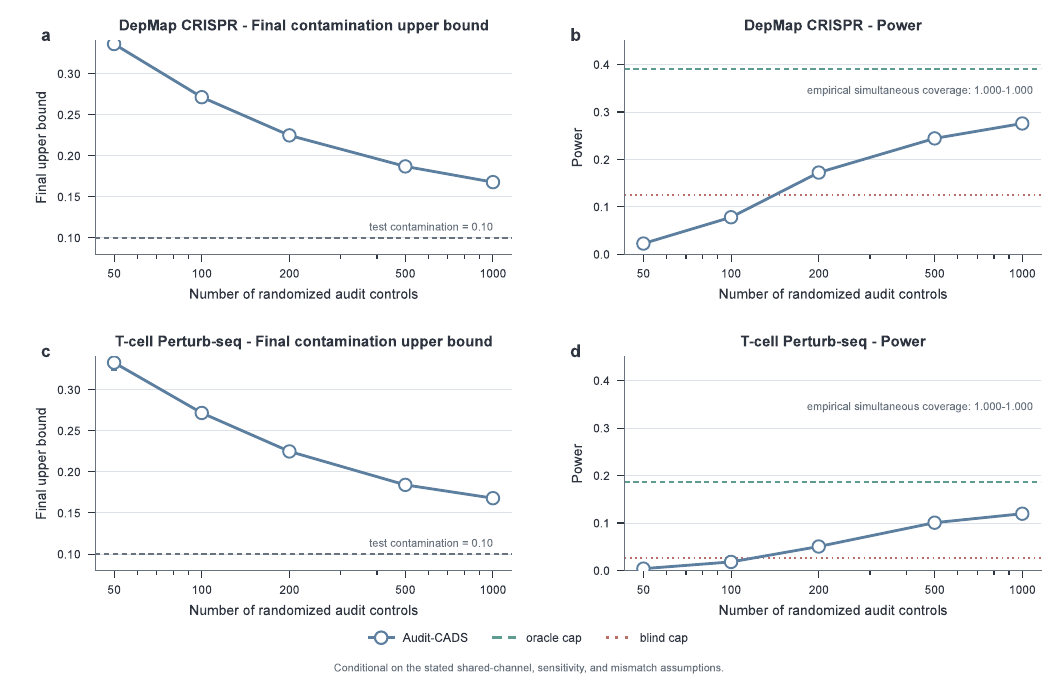}
\caption{Conditional Audit-CADS under its shared-channel
assumptions. Means and 95\% Monte Carlo intervals are shown as the number of
one-read randomized controls grows. The first test round uses $U_0=1$.
Dashed horizontal lines show the oracle-$\varepsilon$ and fixed blind-cap
power. Coverage is simultaneous across the cap path.}
\label{fig:supp-audit}
\end{figure}

\begin{table}[t]
\centering
\small
\resizebox{\textwidth}{!}{%
\begin{tabular}{lrrrrr}
\toprule
Reference dataset & Audit $|A|$ & Final bound & Coverage & FDR & Power \\
\midrule
DepMap CRISPR & 50 & 0.336 (0.002) & 1.000 (0.000) & 0.000 (0.000) & 0.023 (0.002) \\
DepMap CRISPR & 100 & 0.271 (0.001) & 1.000 (0.000) & 0.000 (0.000) & 0.078 (0.003) \\
DepMap CRISPR & 200 & 0.225 ($9.76\times10^{-4}$) & 1.000 (0.000) & 0.000 (0.000) & 0.172 (0.002) \\
DepMap CRISPR & 500 & 0.187 ($5.68\times10^{-4}$) & 1.000 (0.000) & $4.78\times10^{-6}$ ($4.78\times10^{-6}$) & 0.244 (0.001) \\
DepMap CRISPR & 1000 & 0.168 ($3.87\times10^{-4}$) & 1.000 (0.000) & $5.68\times10^{-6}$ ($5.68\times10^{-6}$) & 0.276 ($7.44\times10^{-4}$) \\
T-cell Perturb-seq & 50 & 0.333 (0.004) & 1.000 (0.000) & 0.000 (0.000) & 0.004 (0.001) \\
T-cell Perturb-seq & 100 & 0.272 (0.003) & 1.000 (0.000) & 0.000 (0.000) & 0.018 (0.002) \\
T-cell Perturb-seq & 200 & 0.225 (0.002) & 1.000 (0.000) & 0.000 (0.000) & 0.050 (0.003) \\
T-cell Perturb-seq & 500 & 0.184 (0.001) & 1.000 (0.000) & 0.000 (0.000) & 0.101 (0.002) \\
T-cell Perturb-seq & 1000 & 0.168 ($8.37\times10^{-4}$) & 1.000 (0.000) & 0.000 (0.000) & 0.120 (0.001) \\
\bottomrule
\end{tabular}%
}
\normalsize
\caption{Audit-size results. Each cell is mean (MC-SE); DepMap uses
1000 seeds and T-cell Perturb-seq uses 200. Here
$\varepsilon_{\rm test}=\varepsilon_{\rm audit}=0.10$,
$\delta=0.01$, $\rho=0$, and the certificate level is 0.09. The audit is given the derived
$\kappa=0.977$ for the declared class $\Delta=2$; the simulated channel has sensitivity $0.998$
and clean false-alarm rate $\alpha_R=0.023$.}
\label{tab:supp-audit}
\end{table}

For DepMap, the oracle cap has power 0.390 and the fixed blind
$\bar\varepsilon=0.30$ cap has power 0.126. With 1000 controls, Audit-CADS
has mean final bound 0.168, simultaneous coverage 1.000, FDR
$5.68\times10^{-6}$, and power 0.276. For T-cell Perturb-seq, the corresponding
coverage and power are 1.000 and 0.120. Small audit sets remain conservative,
especially because the first test round has cap one: at $|A|=50$ DepMap power is
0.023, below the fixed blind cap.

Substituting $\kappa=1$ and a false-alarm-free detector raises $|A|=1000$ power
to 0.313 and lowers the mean final bound to 0.141. We report that configuration
only to price the assumption: no threshold detector attains it at any finite
$\tau$, since corruption arriving just below $\tau$ is never flagged. The
$0.037$ power difference is what the earlier, idealized channel was silently
borrowing.

\begin{table}[t]
\centering
\small
\begin{tabular}{rrrrrrr}
\toprule
$\kappa$ & $\rho$ & $\varepsilon_{\rm audit}$ & Final bound (SE) & Coverage & FDR & Power (SE) \\
\midrule
0.9 & 0.02 & 0.08 & 0.197 (0.002) & 1.000 & 0.000 & 0.214 (0.004) \\
0.9 & 0.05 & 0.05 & 0.186 (0.002) & 1.000 & 0.000 & 0.233 (0.003) \\
0.9 & 0.10 & 0.00 & 0.143 (0.000) & 1.000 & 0.000 & 0.288 (0.001) \\
0.7 & 0.02 & 0.08 & 0.214 (0.002) & 1.000 & 0.000 & 0.175 (0.005) \\
0.7 & 0.05 & 0.05 & 0.201 (0.002) & 1.000 & 0.000 & 0.203 (0.004) \\
0.7 & 0.10 & 0.00 & 0.155 (0.000) & 1.000 & 0.000 & 0.271 ($9.73\times10^{-4}$) \\
0.5 & 0.02 & 0.08 & 0.241 (0.003) & 1.000 & 0.000 & 0.106 (0.006) \\
0.5 & 0.05 & 0.05 & 0.224 (0.003) & 1.000 & 0.000 & 0.141 (0.006) \\
0.5 & 0.10 & 0.00 & 0.177 (0.000) & 1.000 & 0.000 & 0.226 ($9.66\times10^{-4}$) \\
\bottomrule
\end{tabular}
\normalsize
\caption{Boundary-assumption stress at $|A|=200$ on DepMap
(200 seeds). Each mean with uncertainty is shown as mean (MC-SE).
The generator sets
$\varepsilon_{\rm audit}=\max(0,\varepsilon_{\rm test}-\rho)$ and
$\kappa_{\rm actual}=\kappa$; therefore increasing $\rho$ also lowers the
simulated audit corruption. Rows must not be interpreted as empirical
validation of $\kappa$ or $\rho$.}
\label{tab:supp-sensitivity}
\end{table}

Table~\ref{tab:supp-sensitivity} deliberately evaluates the assumed boundary
$\varepsilon_{\rm test}=\varepsilon_{\rm audit}+\rho$. The nonmonotone-looking
power comparison across $\rho$ is induced by changing the audit data-generating
rate, not by a claim that larger mismatch improves performance.

\section{Clean-Null Misspecification Sensitivity}
\label{sec:nullstress}

Everything CADS declares rests on one inequality. The cap solves
$(1-\varepsilon)\,\mathbb E_{f_0}[\min(L,C^*)]+\varepsilon C^*=1$ for an assumed
boundary null $f_0=N(0,s^2)$, so with the cap built from $\bar\varepsilon$ and
contamination arriving at $\varepsilon$, a null read satisfies
\[
\mathbb E_{\rm null}[e]\le(1-\varepsilon)\,\mathbb E_{f_0'}[\min(L,C^*(\bar\varepsilon))]
+\varepsilon\,C^*(\bar\varepsilon),
\]
the worst case being contamination that saturates the cap. Validity needs this at most
one. Table~\ref{tab:supp-nullstress} measures it by Monte Carlo ($4\times10^6$ draws
per cell) for a family of misspecified true nulls $f_0'$. A cell is marked broken only
when its $95\%$ lower endpoint clears one.

\begin{table}[h]
\centering
\small
\setlength{\tabcolsep}{2pt}
\begin{tabular}{lrrrr}
\toprule
true clean null $f_0'$ & $\bar\varepsilon{=}0.10$ & $0.15$ & $0.20$ & $0.30$ \\
\midrule
assumed $N(0,s^2)$        & 1.0002 & 0.9043 & 0.8405 & 0.7557 \\
\midrule
null mean $+0.05$         & 1.0723$\dagger$ & 0.9687 & 0.8994 & 0.8069 \\
null mean $+0.10$         & 1.1491$\dagger$ & 1.0366$\dagger$ & 0.9610 & 0.8600 \\
null mean $+0.20$         & 1.3154$\dagger$ & 1.1816$\dagger$ & 1.0913$\dagger$ & 0.9703 \\
null mean $+0.30$         & 1.4963$\dagger$ & 1.3363$\dagger$ & 1.2281$\dagger$ & 1.0838$\dagger$ \\
\midrule
null sd $\times1.1$       & 1.0463$\dagger$ & 0.9400 & 0.8693 & 0.7760 \\
null sd $\times1.5$       & 1.1957$\dagger$ & 1.0510$\dagger$ & 0.9568 & 0.8353 \\
null sd $\times2.0$       & 1.3202$\dagger$ & 1.1400$\dagger$ & 1.0250$\dagger$ & 0.8796 \\
\midrule
standardized $t_{10}$     & 0.9760 & 0.8830 & 0.8217 & 0.7408 \\
standardized $t_{5}$      & 0.9440 & 0.8550 & 0.7969 & 0.7211 \\
standardized $t_{3}$      & 0.8782 & 0.7976 & 0.7460 & 0.6799 \\
heteroscedastic $0.5$--$1.5\times$ & 0.9726 & 0.8786 & 0.8170 & 0.7363 \\
\bottomrule
\end{tabular}
\normalsize
\caption{Clean-null misspecification. Implied $\mathbb E_{\rm null}[e]$ at true
$\varepsilon=0.10$; validity needs $\le 1$ and $\dagger$ marks broken. Location and
scale families perturb the reference-null pool only.}
\label{tab:supp-nullstress}
\end{table}

Three implications follow. First, at its own operating point
($\bar\varepsilon=\varepsilon$) the cap is \emph{exactly} tight: the assumed row reads
$1.0002$, because the budget is solved as an equality. There is no margin there at all,
so any misspecification that raises the clean term breaks validity immediately: a
null mean $0.05$ above the assumed boundary, or a null sd underestimated by ten
percent, is already enough.

Second, the shape families we tested are benign, but shape in general is not, and the
distinction matters more than the measurement. Heavier tails and heteroscedastic scale
both \emph{lower} $\mathbb E_{\rm null}[e]$ here ($t_3$ reaches $0.8782$): the cap
truncates the upper tail while the lower tail drives $L$ toward zero, and for these
families the second effect wins. That is not a theorem about heavy tails. Fixing the
mean at $0$ and the variance at $s^2$ still leaves enough freedom to break the
inequality: put mass $0.59924$ at $0$, $0.2$ at each of $\pm0.76$, and $0.00038$ at
each of $\pm5$, giving kurtosis $9.7$, and $\mathbb E_{f_0'}[\min(L,C^*)]=0.8654$
against the Gaussian's $0.7541$, so the implied $\mathbb E_{\rm null}[e]=1.100>1$. The
capped e-value is not shape-robust; it is robust to the two shape families we happened
to test, and the Gaussian clean null is load-bearing. What carries into deployment is
the sensitivity to mean and scale that the table measures, and only inside a declared
clean-law family.

Third, over-stating $\varepsilon$ is the only headroom a frozen cap has, and the table
prices it. Taking $\bar\varepsilon=0.20$ absorbs a null mean shift of $0.10$ or a
$1.5\times$ scale error; $\bar\varepsilon=0.30$ absorbs $0.20$ or $2.0\times$. This is the
operational content of over-rating being safe: it is not merely harmless, it is the only
defence a fixed cap has against a mis-located or mis-scaled null. It is not a defence
against the wrong clean-law family, which instead requires declaring the wider family and
re-solving the cap for its worst case.

\paragraph{What a broken budget costs.}
Losing the precondition is not the same as losing control. Rerunning the DepMap screen
(200 paired seeds, $\bar\varepsilon=\varepsilon=0.10$, $q=0.10$) with the null pool
shifted, CADS's empirical FDR moves from $0.0001$ (assumed) to $0.0016$ at $+0.15$ and
$0.0126$ at $+0.30$, while naive BH moves from $0.372$ to $0.523$ and $0.653$. Power is
unchanged throughout ($0.3956$--$0.3960$), which confirms the perturbation is acting on
the null pool alone. So the degradation is real and monotone (the violation is not
cosmetic), yet even at twice the breaking shift the realized FDR is still eight times
under $q$. We claim the first fact and not the second: past the boundary the guarantee
is void, and the remaining margin is empirical, not certified.

\section{Unmodeled Contamination Channels}
\label{sec:contamstress}

Theorem~A.1 quantifies over an arbitrary contaminating density, allowed to depend on the
history, subject only to a per-read conditional rate bound. The frozen protocol
exercises one member of that class, and an unusually kind one: i.i.d., so the
conditional rate equals the marginal; one-directional; almost always cap-saturating;
and far above the audit's threshold, so the audit sees it. A reviewer is entitled to
ask what happens for the unkind members, so we ran them. Every channel below holds the
\emph{marginal} rate at $\varepsilon=0.10$ and differs only in what else it does.
Table~\ref{tab:supp-contamstress} reports the resulting FDR and power.

\begin{table}[t]
\centering
\small
\begin{tabular}{llrrr}
\toprule
Channel & Bound & CADS FDR & Power & Naive FDR \\
\midrule
frozen $\mathrm{U}(2,4)$ (reference) & holds & 0.0001 & 0.3956 & 0.3720 \\
cap-saturating point mass            & holds & 0.0001 & 0.3958 & 0.0083 \\
subthreshold (just under $\tau$)     & holds & 0.0002 & 0.3962 & 0.0204 \\
mixed-sign                           & holds & 0.0000 & 0.2435 & 0.1320 \\
history-dependent (re-reads)         & holds & 0.0095 & 0.4736 & 0.7465 \\
target-specific (near boundary)      & holds & 0.0009 & 0.4327 & 0.4304 \\
clustered (one coin per campaign)    & \textbf{violated} & 0.0474 & 0.3991 & 0.0500 \\
\bottomrule
\end{tabular}
\normalsize
\caption{Contamination stress, DepMap, 200 paired seeds per channel,
$q=0.10$. All channels have marginal rate $0.10$. Only the last violates the per-read
\emph{conditional} bound the theorem requires.}
\label{tab:supp-contamstress}
\end{table}

The six channels that respect the conditional bound all hold, at worst $0.0095$ against
$q=0.10$, and they are not gentle: corruption placed exactly at the cap threshold,
corruption hidden just below the audit's detector, corruption aimed only at nulls near
the boundary, and corruption that fires only on candidates the agent chose to re-read,
which blinded controls, never re-read, cannot see at all. This is the theorem behaving as
stated: the cap bounds each increment by $C^*$ whatever the contaminating law is, so
shape, sign, position and history-dependence do not reach validity. They reach power
instead. Mixed-sign corruption costs $0.15$ of power because downward corruption destroys
signal the cap was never protecting.

The last row is the one that matters. Flipping a single coin per campaign and then
corrupting everything or nothing keeps the marginal rate at $0.10$ and makes the
conditional rate zero or one, which is exactly what \eqref{eq:supp-mismatch} forbids. Its
FDR rises by two orders of magnitude, to $0.0474$. At this
operating point it still does not exceed $q$ (the $95\%$ interval is
$[0.0201,0.0747]$), so this is not a demonstration that the certificate fails, it is a
demonstration that the assumption is load-bearing and that the empirical margin absorbs
its violation here. The same asymmetry appears under clean-null misspecification
(\S\ref{sec:nullstress}): losing the precondition is not the same as losing control, and
we claim only the precondition.

Two things this does not establish. The realized FDR at the primary point is roughly
$750\times$ below $q$, which is consistent with a loose certificate in this
signal-to-noise regime rather than a tight one; and the only valid comparator declares
nothing at the matched budget, so no similarly powered valid method demonstrates that
this conservatism buys a useful robustness--power frontier.

\section{Proposer Contribution Analyses}
\label{sec:llm}

This section supports ``What the Certificate Measures'' in the main paper. It reads three open-weight
discovery loops through the certificate to measure what each proposer adds, and reports every figure the
main text summarizes. Two of the three were fixed in the original frozen contract below; the third,
Meta-Llama-3.1-8B-Instruct, was added afterwards on the identical protocol and is reported as an
extension rather than as part of that freeze.

\paragraph{Protocol.}
The measurement uses the full DepMap 26Q1 ACH-000446 registry
($m=18{,}159$; $1{,}822$ reference hits), \emph{not} the $6{,}000$-gene subsample the
validity sweeps of \S\ref{sec:results} use, so the two are never compared across the
boundary. The contract was frozen before the run: two models
(Qwen2.5-7B-Instruct, Mistral-Nemo-Instruct-2407), three feedback modes, two prompt
modes, paired seeds $0$--$19$, $12$ rounds, batch $40$ ($480$ reads over $260$ observed
candidates), re-observation fraction $0.5$, $\varepsilon=q=0.10$, temperature $0.7$,
top-$p$ $0.9$, at most $700$ new tokens and two generation attempts per round.
Under-delivery is filled by a seeded, label-free random draw over eligible unseen
candidates; oracle completion is forbidden and the per-round raw, valid, invalid,
duplicate, and fallback counts are retained. The \emph{thin} prompt ships with the loop;
the \emph{informed} prompt additionally lists the already-tested candidates and asks for
unseen ones, changing nothing else. The pre-specified claim gate is that the paired
two-sided $95\%$ $t$ interval on \texttt{cads.true} has a lower endpoint above zero for
both full-minus-none and full-minus-permuted.

\paragraph{Harness validation by replay.}
Before trusting the no-LLM control and the attribution decomposition, we verify that
the analysis harness reproduces the GPU run exactly. A CPU replay re-runs each frozen
arm from the stored per-round \texttt{responses}, keyed on
$(\text{seed},\text{round},\text{attempt})$ against the recorded \texttt{generation\_seed},
and re-derives every certified declaration. It reproduces all $40$ thin campaigns with
zero mismatches for both models. The no-LLM control and the origin decomposition
therefore run the identical certified-declaration code the GPU run used; the only change
is which mechanism fills a slot.

\paragraph{The no-LLM control.}
An identical loop whose proposer returns nothing (the random fallback fills every
fresh slot) certifies $2.60$ true discoveries under full feedback and $0.00$ under
both absent and permuted feedback. The paired $95\%$ $t$ interval is $[1.35,3.85]$
against each, so the preregistered gate the field runs passes with no LLM in the loop.
The non-LLM re-selection policy re-observes candidates that read high, deepening their
evidence, which is what clears the gate.
Table~\ref{tab:supp-llm} reports the complete proposer-by-feedback grid.

\begin{table}[t]
\centering
\small
\begin{tabular}{lllrrrr}
\toprule
Prompt & Model & Feedback & LLM slots & CADS true & CADS FDR & Naive FDR \\
\midrule
thin & Qwen2.5-7B   & full     & 57.7  & 2.95 & 0.000 & 0.228 \\
thin & Qwen2.5-7B   & none     & 257.5 & 0.30 & 0.000 & 0.522 \\
thin & Qwen2.5-7B   & permuted & 55.4  & 0.00 & 0.000 & 0.622 \\
thin & Mistral-Nemo & full     & 131.5 & 2.95 & 0.000 & 0.154 \\
thin & Mistral-Nemo & none     & 196.8 & 0.25 & 0.000 & 0.509 \\
thin & Mistral-Nemo & permuted & 167.9 & 0.00 & 0.000 & 0.407 \\
\midrule
informed & Qwen2.5-7B   & full     & 175.8 & 5.50 & 0.000 & 0.155 \\
informed & Qwen2.5-7B   & none     & 237.8 & 0.45 & 0.000 & 0.628 \\
informed & Qwen2.5-7B   & permuted & 189.2 & 0.00 & 0.000 & 0.523 \\
informed & Mistral-Nemo & full     & 190.2 & 3.90 & 0.000 & 0.135 \\
informed & Mistral-Nemo & none     & 186.7 & 0.10 & 0.000 & 0.447 \\
informed & Mistral-Nemo & permuted & 191.0 & 0.00 & 0.000 & 0.371 \\
\midrule
thin & Llama-3.1-8B & full     & 161.8 & 5.40 & 0.000 & 0.129 \\
thin & Llama-3.1-8B & none     & 242.6 & 0.65 & 0.000 & 0.336 \\
thin & Llama-3.1-8B & permuted & 161.3 & 0.00 & 0.000 & 0.358 \\
informed & Llama-3.1-8B & full     & 251.7 & 5.65 & 0.000 & 0.150 \\
informed & Llama-3.1-8B & none     & 253.4 & 0.40 & 0.000 & 0.469 \\
informed & Llama-3.1-8B & permuted & 256.1 & 0.00 & 0.000 & 0.408 \\
\bottomrule
\end{tabular}
\normalsize
\caption{Full proposer grid ($m=18{,}159$; 20 paired seeds per cell).
``LLM slots'' is the mean number of 260 fresh slots filled with valid unseen
symbols; a seeded label-free draw fills the remainder. CADS FDR is mean FDP
and equals $0.000$ in every cell; naive BH ranges from $0.129$ to $0.628$.
Qwen and Mistral comprise the frozen contract; Llama-3.1-8B is an extension
under the same protocol. The shipped verifier binds every cell to its frozen
shard summary.}
\label{tab:supp-llm}
\end{table}

\paragraph{Displacement.}
Attributing each thin-prompt certified discovery to the mechanism that first proposed
it (re-observation deepens evidence but never originates) gives, per campaign, Qwen
$2.15$ LLM-originated $+\,0.80$ fallback $=2.95$, and Mistral-Nemo $2.35+0.60=2.95$; the
replay above confirms this decomposition. Read alone, the majority is LLM-originated. But
the fallback that fills the other $202$ (Qwen) or $129$ (Nemo) slots finds $0.80$ and
$0.60$, while the no-LLM loop's fallback fills all $260$ and finds $2.60$. The LLM's
originated discoveries therefore displace $1.80$ (Qwen) and $2.00$ (Nemo) the fallback
would otherwise have made, and each model's net over the identical no-LLM loop is only
$+0.35$ $[-1.05,1.75]$ and $+0.35$ $[-1.53,2.23]$, both indistinguishable from zero.

\paragraph{The prompt mechanism.}
The thin prompt is never told which candidates were already tested, so it re-proposes
genes it was shown read high and is scored a duplicate. The informed prompt fixes only
this. Valid unique symbols per campaign then rise from $57.7$ to $175.8$ for Qwen and
$131.5$ to $190.2$ for Mistral-Nemo, and Qwen's fallback share falls from $202$ to $84$
slots. The net over the no-LLM loop moves with delivery: Qwen from $+0.35$ to $+2.90$
$[1.26,4.54]$, a prompt effect of $+2.55$ $[1.40,3.70]$, both excluding zero;
Mistral-Nemo from $+0.35$ to $+1.30$ $[-0.62,3.22]$, prompt effect $+0.95$ $[-1.27,3.17]$,
neither excluding zero. In this initial 20-seed comparison, the interval excludes zero
for one model and remains inconclusive for the other.

\paragraph{Gate passage.}
Every LLM arm also clears the preregistered gate: under the thin prompt, Qwen
full-minus-none is $2.65$ $[1.48,3.82]$ and full-minus-permuted $2.95$ $[1.88,4.02]$;
Mistral-Nemo is $2.70$ $[1.53,3.87]$ and $2.95$ $[1.79,4.11]$; under the informed prompt
the same comparisons rise to $5.05$--$5.50$ (Qwen) and $3.80$--$3.90$ (Nemo). So does the
no-LLM control. Passing the gate is thus consistent with a real proposer, an absent
proposer, and a duplicate-ridden one alike; it does not by itself measure the LLM.

\paragraph{Budget.}
The low absolute yield is the read budget. Running the same no-LLM loop at $1\times$,
$2\times$, and $4\times$ the reads (more rounds, same batch) certifies $2.60$, $11.30$,
and $20.20$ true discoveries, with the gate holding throughout (full-minus-none
$[1.35,3.85]$, $[9.89,11.61]$, $[17.89,19.91]$). Yield grows over these three budgets on the same $20$ seeds; we did not
repeat the sweep on fresh seeds, so this does not by itself rule out a seed-sample effect.

\paragraph{Limits.}
Absolute power is $0.0016$--$0.0031$: the models certify roughly three to six of the
$1{,}822$ reference hits, because even the informed prompt emits a limited number of
valid unseen symbols per round. The LLM slot share varies across conditions by more than
a factor of four, so the passing contrasts are between campaigns, not between proposal
qualities holding delivery fixed. Permuted feedback yields exactly zero certified
discoveries in every seed for all three models, so each full-minus-permuted interval is driven
by its full arm. The measurement covers three open-weight models on one semi-synthetic,
DepMap-backed loop at $20$ seeds; it is model-dependent on a small sample.

\paragraph{Determinism.}
The run is deterministic given its seeds: the replay harness above reproduces all $40$
thin campaigns from their stored responses with zero mismatches, and
\path{verify_paper_measurement_numbers.py} runs that replay as one of its own checks
rather than reading the attribution it produces.

\paragraph{A strong non-LLM prior.}
The random fallback is a weak non-LLM baseline. A competent one ranks candidates by a truth-isolated
pan-essentiality prior: the mean gene effect $u=-\mathrm{effect}-0.5$ (higher is more essential, the
same continuous signal the target uses to define a hit) across the five DepMap 24Q4 lines \emph{other}
than the target ACH-000446, so even a different release of the target line cannot leak its own
dependency; genes absent from the panel take the pool minimum. The proposer fills each round's fresh
slots with the top unseen candidates by this prior and is otherwise identical to the LLM runs (same
registry, reads, exploit re-selection, noise tape, and seeds); being a predictable acquisition it is
covered by the terminal FDR guarantee. It is deterministic and CPU-only, and the continuous effect
avoids the ties a saturating dependency probability would create, so the ranking does not depend on
tie-breaking. The prior certifies $12.65$ true discoveries at empirical FDR $0$, a paired net $+10.05$
$[8.69,11.41]$ ($t$; bootstrap $[8.80,11.30]$) over the no-LLM loop, far above all three models.
\path{revision/llm_experiment/external_prior_baseline.py} writes
\path{revision/results/external_prior_v2.json}, which \path{verify_paper_measurement_numbers.py} binds
to the paper.

\paragraph{A third model.}
Meta-Llama-3.1-8B-Instruct (4-bit GGUF) runs the same protocol as Qwen and Mistral-Nemo: the v2 registry,
$12$ rounds of batch $40$, $\varepsilon{=}q{=}0.10$, paired seeds $0$--$19$, and thin and informed prompts
under full/none/permuted feedback. It certifies $5.40$ (thin) and $5.65$ (informed) at empirical FDR $0$,
netting $+2.80$ $[0.50,5.10]$ and $+3.05$ $[1.12,4.98]$ over the no-LLM loop (both excluding zero), and its
feedback gate holds (full minus none and full minus permuted intervals span $[2.75,7.29]$ across arms).
Unlike Qwen, its thin prompt already nets positive and its prompt effect is $+0.25$ $[-1.80,2.30]$,
an interval that includes zero, so the three models show three distinct patterns and the prompt effect is
model-dependent. \path{revision/llm_experiment/llama_measurement.py} reads the shipped per-(prompt,feedback)
shards and writes \path{revision/results/llama_v2.json}, which \path{verify_paper_measurement_numbers.py}
    binds to these values.

\paragraph{Quality versus delivery.}
The informed prompt raises the number of valid candidates delivered and could also raise their quality.
To separate the two we pre-specified a follow-up in a design file
(\path{revision/llm_experiment/FIXED_DELIVERY_DESIGN.md}, frozen before the fresh held-out seeds were run
and shipped with the code, though not lodged with a third-party registry) holding every proposer to an
identical per-round quota of $K$ valid, unseen, unique candidates. The intent is that only \emph{which}
candidates are chosen may differ; the delivery accounting below shows that the thin prompt cannot fill
the quota, so the arms end up matched on the quota requested rather than on the delivery achieved, and
we read the result accordingly. Three changes tighten the contrast relative to the
frozen runs: $50$ fresh generation seeds ($20$--$69$, disjoint from the $0$--$19$ used to set the quota);
a candidate-level noise tape in which the $j$-th read of candidate $i$ is drawn from a stream keyed by
$(\text{seed},i,j)$, so every arm sees the identical readout for the same (candidate, visit); and a
common per-round fallback priority shared by all arms. Truncation to the quota is by generation order
only, and under-delivery is recorded as a delivery failure and filled from the common fallback, never
from oracle truth.

\paragraph{Setting the quota.}
A calibration on seeds $0$--$19$ with the quota released measured achievable delivery. Round $0$
saturates at $40$ for every model and prompt, but in rounds $1$--$11$ Qwen-thin achieves a median of
$0$ new valid unique candidates even at $8$ generation attempts (mean $4.25$), against $17.3$ for
Qwen-informed, $12.9$ and $16.6$ for Mistral thin/informed, and $17.8$ and $19.9$ for Llama. Qwen-thin
is therefore the binding arm, and we froze a single global $K{=}5$ with $N_{\max}{=}8$ before touching
the held-out seeds; round $0$ is left uncapped because it is already matched.

\paragraph{Delivery accounting.}
No model complies with the quota under the thin prompt, and how badly it fails is model-dependent:
thin misses $40.7$ of the $55$ late-round quota slots for Qwen, $26.0$ for Mistral-Nemo and $13.1$ for
Llama, while the informed prompt closes the gap in all three (at most $0.7$ slots missed). The quota
therefore equalises what each proposer is \emph{asked} for, not what it delivers, and the one effect
present in every model is the delivery difference itself. The certified count does not move in a common
direction: informed is ahead for Qwen and Mistral-Nemo and behind for Llama. Neither does per-slot
reference-hit precision, which is lower under the informed prompt for Qwen and Llama and higher for
Mistral-Nemo. Precision is also measured over different numbers of slots in the two arms, because thin
fills the slots it manages to fill first, so we read it as descriptive rather than as a like-for-like
quality contrast. In the frozen $20$-seed runs the informed prompt fills $175.8$ of $260$ fresh slots
against thin's $57.7$ at precision $0.226$ versus $0.215$ (paired difference $+0.010$ $[-0.037,0.057]$,
one model).

\begin{center}
\small
\setlength{\tabcolsep}{4pt}
\begin{tabular}{llcccc}
\toprule
proposer & prompt & certified & slots & failures & prec. \\
\midrule
Qwen & thin & $3.58$ & $54.3$ & $40.7$ & $0.246$ \\
Qwen & informed & $4.32$ & $94.3$ & $0.7$ & $0.238$ \\
Mistral-Nemo & thin & $3.42$ & $69.0$ & $26.0$ & $0.279$ \\
Mistral-Nemo & informed & $3.58$ & $94.4$ & $0.6$ & $0.304$ \\
Llama & thin & $5.08$ & $81.9$ & $13.1$ & $0.422$ \\
Llama & informed & $4.44$ & $95.0$ & $0.0$ & $0.252$ \\
\midrule
random & --- & $1.74$ & $95.0$ & $0$ & $0.107$ \\
external prior & --- & $16.12$ & $95.0$ & $0$ & $0.937$ \\
\bottomrule
\end{tabular}
\end{center}
\noindent Each row is $50$ campaigns; the two non-LLM floors are model-independent, so they carry one
row each. No campaign among the $700$ produced a false declaration, so empirical FDR is $0.0000$
everywhere. A sample of zeros has zero standard error, so a normal-approximation bound would report an
upper limit of zero, which is not a bound. Because $\mathrm{FDP}\le1$ implies
$\mathbb E[\mathrm{FDP}]\le\Pr(\mathrm{FDP}>0)$, the Clopper--Pearson limit for the latter bounds the
former, giving a one-sided $95\%$ bound of $0.058$. The independent unit there is the generation seed
and not the campaign, because the three models share the same $50$ seeds and the same noise tape, so
pooling them does not buy $n{=}150$.

\paragraph{The prompt effect at a matched quota.}
Paired seed-cluster percentile bootstrap ($10{,}000$ resamples) is the primary interval; Holm corrects
across the three model-level contrasts.

\begin{center}
\small
\setlength{\tabcolsep}{4pt}
\begin{tabular}{lccc}
\toprule
model & natural & fixed & Holm \\
\midrule
Qwen & $+1.10\,[-0.10,2.28]$ & $+0.74\,[-0.12,1.66]$ & $0.337$ \\
Mistral & $+1.14\,[-0.14,2.44]$ & $+0.16\,[-0.98,1.34]$ & $0.790$ \\
Llama & $+0.34\,[-1.20,1.84]$ & $-0.64\,[-1.88,0.56]$ & $0.645$ \\
\bottomrule
\end{tabular}
\end{center}
\noindent No interval excludes zero. A pre-specified TOST equivalence test at a margin of one
declaration does not establish equivalence either ($p_{\text{TOST}}$ between $0.08$ and $0.29$), so the
result supports no detectable effect at this sample size, not a zero effect.

\paragraph{Seeds versus noise model.}
The natural prompt effect also attenuates on the fresh seeds, from the $+2.55$ $[1.40,3.70]$ reported at
$20$ seeds to $+1.10$ $[-0.10,2.28]$. Re-running the natural arms on the original seeds $0$--$19$ under
the same candidate-level tape returns $+2.00$ $[0.50,3.40]$, which still excludes zero. Changing the
noise model while holding the seeds fixed therefore costs about $0.55$, whereas changing the seed sample
while holding the noise model fixed leaves an interval that includes zero. We did not test the
difference between those two estimates, and one interval excluding zero while another does not is not
itself evidence that they differ, so we do not attribute the attenuation to either factor. What we
report is that the $20$-seed magnitude does not reproduce on fresh seeds.

\paragraph{Baselines at the same quota.}
Every LLM arm is ahead of the label-free floor at the same quota (informed minus random-fixed is
$+2.58$, $+1.84$ and $+2.70$ for Qwen, Mistral and Llama). The truth-isolated pan-essentiality prior
nevertheless dominates all of them, certifying $16.12$ against the strongest LLM row's $5.08$, with the
paired difference against each informed arm between $-11.68$ and $-12.54$. These six baseline contrasts
are exploratory and their intervals are unadjusted pointwise ones, unlike the three prompt-effect
contrasts above, which are the Holm family. Drivers
\path{revision/llm_experiment/fixed_delivery.py} and \path{fixed_delivery_measure.py} write
\path{revision/results/fixed_delivery_v2.json} and \path{fixed_delivery_diagnostic_v2.json}, which
\path{verify_paper_measurement_numbers.py} binds to the values quoted here.

\section{Reproducibility and Artifact Integrity}
\label{sec:repro}

\paragraph{Computing infrastructure.}
The statistical experiments (the primary grid, cross settings, strict baseline, Audit-CADS, and both
stress grids) run on CPU under Python~3.13, NumPy~2.4, and SciPy~1.17, with no GPU and no build step.
The optional open-weights proposer run uses a single 46\,GB data-center GPU with PyTorch, Transformers,
and Accelerate, loading the public checkpoints \path{Qwen/Qwen2.5-7B-Instruct},
\path{mistralai/Mistral-Nemo-Instruct-2407}, and \path{meta-llama/Llama-3.1-8B-Instruct} (the last as a
4-bit Q4\_K\_M GGUF conversion). Two limits on that step are worth stating: \path{requirements-llm.txt}
gives minimum versions rather than an exact lock, and the shards identify each checkpoint by name and
not by commit revision, so re-running generation reproduces the protocol but need not reproduce the
token stream. No GPU is needed to check any reported number: the gates below recompute each from the frozen
seed-level shards, except the quota calibration of Appendix~\ref{sec:llm}, whose own runs are not
shipped and which is therefore checked against the frozen design file instead.

\paragraph{One-command gates.}
The command \texttt{python3}
\path{revision/validate_corrected_results.py} rejects missing
campaigns, duplicate seeds, incomplete $(q,\varepsilon)$ cells, mismatched
full-family sizes, aggregate/seed-row disagreement, stale source hashes, and
incomplete Audit records. Figure and table generation begins by invoking this
gate. The command \texttt{python3}
\path{revision/render_corrected_figures.py} creates all
data-derived panels and tables; then
\texttt{python3} \path{revision/sync_paper_artifacts.py} copies only validated
artifacts into the paper tree and records their SHA-256 hashes.

\paragraph{Result files.}
\path{revision/results/corrected_primary_grid.json} contains 5000
seed-level primary campaigns.
\path{revision/results/corrected_cross_00.json} through
\path{corrected_cross_07.json} contain 1600 cross-setting campaigns.
\path{corrected_zhou_liu_algorithm2.json} contains 200 strict-baseline
campaigns. \path{phase5/cads_certified_eps_summary.json} contains the
primary Audit-CADS runs, cross-reference audit runs, sensitivity runs,
per-seed records, parameters, and code/data hashes. The proposer measurement
(Appendix~\ref{sec:llm}) is under
\path{experiments/20260717-llm-pool-v2} (thin) and
\path{experiments/20260717-llm-pool-v2-informed}, each a $120$-campaign frozen
grid with per-round delivery counts, response traces, and
\texttt{generation\_seed}s; \path{revision/results/FINAL_NUMBERS_v2.json}
collects the reported no-LLM control, attribution, prompt-effect, and budget
aggregates, and the CPU replay/control drivers live under
\path{revision/llm_experiment/}.

\paragraph{Tests.}
\texttt{python3 -m unittest}\\
\texttt{tests.test\_statistical\_contract tests.test\_fixed\_delivery} executes all $62$: $48$ contract tests
and $14$ matched-quota tests, the latter covering the noise tape's frozen
$\mathrm{Uniform}(2,4)$ corruption law, quota enforcement, truncation by
generation order, no-oracle-rescue, and the per-model aggregation contract.
Both modules must be named, because \path{tests/} carries no
\path{__init__.py} and test discovery therefore finds neither. The result
validator recomputes all reportable means and MC-SE values from seed rows
rather than trusting stored aggregates, over the primary grid, the eight cross
settings, the strict baseline, Audit-CADS, and both stress grids; the
paper-number verifier additionally rebuilds the matched-quota summaries from
their $700$ raw per-seed shards and fails if those shards are absent.

\paragraph{Scope of the released evidence.}
The reportable corrected evidence consists of 6800 primary, cross-setting, and
strict-baseline campaigns, plus 4800 Audit-CADS primary and sensitivity seed
records. The proposer measurement adds $240$ frozen LLM campaigns in the
original two-model contract (two models $\times$ two prompt modes $\times$
three feedback modes $\times$ $20$ paired seeds), $120$ more for the Llama
extension on the same grid, $60$ no-LLM control and $20$ external-prior
campaigns at the same seeds, and the matched-quota follow-up's $700$
follow-up and $40$ diagnostic campaigns: $1{,}180$ in all. The independent
CPU replay from stored responses covers the $40$ thin campaigns of the original
contract, which is what validates the analysis harness; every other reported
number, apart from the quota calibration whose own runs are not shipped, is
recomputed from the stored seed-level shards by
\path{verify_paper_measurement_numbers.py}, which ships with them.

\bibliographystyle{plainnat}
\bibliography{refs}
\end{document}